\theoremstyle{my_style}
\newtheorem{theorem}{Theorem}
\newtheorem{lemma}{Lemma}
\newtheorem{proposition}{Proposition}
\newtheorem{remark}{Remark}
\newcommand{\cN}{{\mathcal{N}}}
\newcommand{\cH}{\mathcal{H}}
\newcommand{\cL}{\mathcal{L}}
\newcommand{\cD}{\mathcal{D}}
\newcommand{\bI}{\boldsymbol{I}}
\newcommand{\bA}{\boldsymbol{A}}
\newcommand{\bB}{\boldsymbol{B}}
\newcommand{\bX}{\boldsymbol{X}}
\newcommand{\bU}{\boldsymbol{U}}
\newcommand{\bbM}{\boldsymbol{M}}
\newcommand{\bG}{\boldsymbol{G}}
\newcommand{\bV}{\boldsymbol{V}}
\newcommand{\bx}{\boldsymbol{x}}
\newcommand{\by}{\boldsymbol{y}}
\newcommand{\bu}{\boldsymbol{u}}
\newcommand{\bv}{\boldsymbol{v}}
\newcommand{\bz}{\boldsymbol{z}}
\newcommand{\ba}{\boldsymbol{a}}
\newcommand{\bb}{\boldsymbol{b}}
\newcommand{\bc}{\boldsymbol{c}}
\newcommand{\bw}{\boldsymbol{w}}
\newcommand{\bg}{\boldsymbol{g}}
\newcommand{\bZero}{\boldsymbol{0}}
\newcommand{\bzero}{\boldsymbol{0}}
\newcommand{\bdelta}{\boldsymbol{\delta}}
\newcommand{\btheta}{\boldsymbol{\theta}}
\newcommand{\bpsi}{\boldsymbol{\psi}}
\newcommand{\bomega}{\boldsymbol{\omega}}
\newcommand{\beq}{\begin{equation}}
\newcommand{\eeq}{\end{equation}}
\newcommand{\beqn}{\begin{eqnarray}}
\newcommand{\eeqn}{\end{eqnarray}}
\newcommand{\beqns}{\begin{eqnarray*}}
\newcommand{\eeqns}{\end{eqnarray*}}
\newcommand{\R}{\mathbb{R}}
\newcommand{\N}{\mathbb{N}}
\newcommand{\pend}{\hfill$\square$\\}
\newcommand{\bdiv}{\mathop{\operator@font div}}
\newcommand{\diag}{\mathop{\operator@font diag}}
\newcommand{\conv}{\mathop{\operator@font conv}}
\newcommand{\minim}{\mathop{\operator@font minimize}}
\newcommand{\maxim}{\mathop{\operator@font maximize}}
\newcommand{\sign}{\mathop{\operator@font sign}}
\newcommand{\proj}{\mathop{\operator@font proj}}
\newcommand{\spa}{\mathop{\operator@font span}}
\newcommand{\epi}{\mathop{\operator@font epi}}
\newcommand{\dom}{\mathop{\operator@font dom}}
\newcommand{\real}{\mathop{\operator@font Re}}
\newcommand{\imag}{\mathop{\operator@font Im}}
\newcommand{\sinc}{\mathop{\operator@font sinc}}
\newcommand{\trace}{\mathop{\operator@font tr}}
\newcommand{\vecbr}{\mathop{\operator@font vecb_r}}
\newcommand{\veco}{\mathop{\operator@font vec}}
\newtheorem*{myassm*}{Assumption}
\newtheorem{myassm}{Assumption}
\begin{document}
%
\title{Online Distributed Learning Over Networks in RKH Spaces Using Random Fourier Features}
%
%
%

\author{Pantelis~Bouboulis,~\IEEEmembership{Member,~IEEE,}
        Symeon~Chouvardas,~\IEEEmembership{Member,~IEEE,}
        and~Sergios~Theodoridis,~\IEEEmembership{Fellow,~IEEE}
\thanks{P. Bouboulis and S. Theodoridis are with the Department
of Informatics and Telecommunications, University of Athens, Greece,
e-mails: panbouboulis@gmail.com, stheodor@di.uoa.gr.}
\thanks{S. Chouvardas is with the Mathematical and Algorithmic Sciences Lab
France Research Center, Huawei Technologies Co., Ltd.,
e-mail: symeon.chouvardas@huawei.com}
}

\maketitle

\begin{abstract}
We present a novel diffusion scheme for online kernel-based learning over networks. So far, a major drawback of any online learning algorithm, operating in a reproducing kernel Hilbert space (RKHS), is the need for updating a growing number of parameters as time iterations evolve. Besides complexity, this leads to an increased need of communication resources, in a distributed setting. In contrast, the proposed method approximates the solution as a fixed-size vector (of larger dimension than the input space) using Random Fourier Features. This paves the way to use standard linear combine-then-adapt techniques. To the best of our knowledge, this is the first time that a complete protocol for distributed online learning in RKHS is presented. Conditions for asymptotic convergence and boundness of the networkwise regret are also provided. The simulated tests illustrate the performance of the proposed scheme.
\end{abstract}

\begin{IEEEkeywords}
Diffusion, KLMS, Distributed, RKHS, online learning.
\end{IEEEkeywords}

%
\IEEEpeerreviewmaketitle

\section{Introduction}
%
%
%
%

\IEEEPARstart{T}{HE} topic of distributed learning,
has grown rapidly over the last
years.
This is mainly due to the exponentially increasing volume of data,
that leads, in turn,
to increased requirements for memory
and computational resources.
Typical applications include sensor networks,
social networks, imaging, databases,
medical platforms, e.t.c., \cite{SlGiMa14}.
In most of those,
the data cannot be processed
on a single processing
unit (due to memory and/or computational
power constrains)
and  the
respective learning/inference
problem
has to be split into subproblems.
Hence, one has to resort to
distributed algorithms,  which operate on data that
are not available on a single location
but are instead spread out over
multiple locations, e.g.,   \cite{ChSaLi07,agrawal2011big,wu2014data}.

In this paper, we focus on the topic of
\textit{distributed online learning} and
in particular to non linear parameter estimation and classification tasks.
More specifically, we consider a decentralized network which comprises of
nodes, that observe data generated by a non linear model in a sequential fashion.
Each node communicates its own estimates of the unknown parameters to its neighbors and exploits simultaneously a)
the information that it receives and b) the observed datum, at each time instant, in order to update the associated with it estimates.
Furthermore, no assumptions are made regarding the presence of a central node, which could
perform all the necessary operations. Thus, the nodes act as independent learners
and perform the computations by themselves.
Finally, the task of interest is considered to be common across the nodes
and, thus, cooperation among each other is
meaningful and beneficial, \cite{Sa13, Theo_ML}.

The problem of linear  online estimation has been
 considered in several works.
 These include diffusion-based algorithms, e.g., \cite{ChSlTh12,Lopes,Cavalcante},
 ADMM-based schemes, e.g., \cite{ScMaGi09,MaScGi09}, as well as consencus-based ones, e.g., \cite{bertsekas1989parallel,dimakis2010gossip}. The multitask learning problem, in which
 there are more than one parameter vectors to
 be estimated, has also been treated, e.g.,
 \cite{ChCeSa14,PlBoBe15}.
The literature on online distributed classification
 is more limited; in \cite{forero2010consensus}, a batch distributed SVM algorithm
 is presented,  whereas in \cite{sayedclassification}, a diffusion based scheme
 suitable for classification is proposed. In the latter, the authors study the problem of distributed online learning  focusing on   strongly-convex risk functions, such as the logistic regression loss,
 which is suitable to tackle classification tasks. The nodes of the network cooperate via the diffusion rationale.
In contrast to the vast majority of works on the topic of distributed online learning,
which assume a linear relationship between input and output measurements,
in this paper we tackle the more general problem, i.e., the distributed online \textit{non--linear}
learning task. To be more specific, we assume that the data are generated by a model $y = f(\bx)$,
where $f$ is a non-linear function that lies in a \textit{Reproducing Kernel Hilbert Space} (RKHS).
These are inner-product function spaces, generated by a specific kernel function, that have
become popular models for non-linear tasks, since the introduction of the celebrated Support Vectors Machines
(SVM) \cite{Scholkopf_2002_2276, ShaweTaylor_2004_9255, Theodoridis_2008_9253, Theo_ML}.

Although there have been methods that attempt to generalize
linear online distributed strategies to the non-linear domain using RKHS, mainly in the context of the Kernel LMS
e.g., \cite{Distributed_KLMS1, Distributed_KLMS2, Distributed_KLMS3}, these have major drawbacks.
In \cite{Distributed_KLMS1} and \cite{Distributed_KLMS3}, the estimation of $f$, at each node, is given as an increasingly
growing sum of kernel functions centered at the observed data. Thus, a) each node has to transmit
the entire sum at each time instant to its neighbors and b) the node has to fuse together all sums received by its
neighbors to compute the new estimation. Hence, both the communications load of the entire network as well as the
computational burden at each node grow linearly with time. Clearly, this is impractical for real life applications.
In contrast, the method of \cite{Distributed_KLMS2} assumes that these growing sums are limited by a sparsification strategy;
how this can be achieved is left for the future. Moreover, the aforementioned methods offer no theoretical results regarding
the consensus of the network. In this work, we present a complete protocol for distributed online non-linear learning for both regression and classification tasks,
overcoming the aforementioned problems.
 Moreover, we present theoretical results regarding network-wise consensus and regret bounds. The proposed framework offers fixed-size communication and computational load
as time evolves. This is achieved through an efficient approximation of the growing sum using the random Fourier features rationale \cite{RahimiRecht}. To the best of our knowledge, this is the first time that such a method appears in the literature.

Section \ref{SEC:prelim} presents a brief background on kernel online methods and summarizes the main tools and notions used in this manuscript. The main contributions of the paper are presented in section \ref{SEC:Distributed}. The proposed method, the related theoretical results and extensive experiments can be found there. Section \ref{SEC:OKL} presents a special case of the proposed framework for the case of a single node. In this case, we demonstrate how the proposed scheme can be seen as a fixed-budget
alternative for online kernel based learning (solving the problem of the growing sum). Finally, section \ref{SEC:CONCL} offers some concluding remarks. In the rest of the paper, boldface symbols denote vectors, while capital letters are reserved for matrices. The symbol $\otimes$ denotes the Kronecker product of matrices and the symbol $\cdot^T$ the transpose of the respective matrix or vector. Finally, the symbol $\|\cdot\|$ refers to the respective $\ell_2$ matrix or vector norm.

\section{Preliminaries}\label{SEC:prelim}
\subsection{RKHS}
Reproducing Kernel Hilbert Spaces (RKHS) are inner product spaces of functions defined on $X$, whose respective point evaluation functional, i.e., $T_x:\cH\rightarrow X: T_x(f) = f(x)$, is linear and continuous for every $x\in X$. This is usually portrayed by the \textit{reproducing property} \cite{Scholkopf_2002_2276, Theo_ML, Cristianini_2000_11710}, which links inner products in $\cH$ with a specific (semi-)positive definite kernel function $\kappa$ defined on $X\times X$ (associated with the space $\cH$). As $\kappa(\cdot, x)$ lies in $\cH$ for all $x\in X$, the reproducing property declares that $\langle \kappa(\cdot, y), \kappa(\cdot, x)\rangle_{\cH} = \kappa(x,y)$, for all $x,y\in X$. Hence, linear tasks, defined on the high dimensional space, $\cH$, (whose dimensionality can also be infinite) can be equivalently viewed as non-linear ones on the, usually, much lower dimensional space, $X$, and vice versa. This is the essence of the so called \textit{kernel trick}: Any kernel-based learning method can be seen as a two step procedure, where firstly the original data are transformed from $X$ to $\cH$, via an implicit map, $\Phi(x) = \kappa(\cdot, x)$, and then linear algorithms are applied to the transformed data. There exist a plethora of different kernels to choose from in the respective literature. In this paper, we mostly focus on the popular Gaussian kernel, i.e., $\kappa(\bx,\by) = e^{\|\bx-\by\|^2/(2\sigma^2)}$, although any other shift invariant kernel can be adopted too.

Another important feature of RKHS is that any regularized ridge regression task, defined on $\cH$, has a unique solution, which can written in terms of a finite expansion of kernel functions centered at the training points. Specifically, given the set of training points $\{(x_n, y_n), n=1,\dots, N, x_n\in X, y_n \in \R\}$, the \textit{representer theorem} \cite{Wahba, Scholkopf_2002_2276}, states that the unique minimizer, $f_*\in\cH$, of $\sum_{n=1}^N l(f(x_n),y_n)  + \lambda\|f\|^2_{\cH}$, admits a representation of the form $f_* = \sum_{n=1}^N a_n \kappa(\cdot, x_n)$, where $l$ is any convex loss function that measures the error between the actual system's outputs, $y_n$, and the estimated ones, $f(x_n)$, and $\|\cdot\|_{\cH}$ is the norm induced by the inner product.

\subsection{Kernel Online Learning}
The aforementioned properties have rendered RKHS a popular tool for addressing non linear tasks both in batch and online settings. Besides the widely adopted application on SVMs, in recent years there has been an increased interest on non linear online tasks around the squared error loss function. Hence, there have been kernel-based implementations of LMS \cite{Liu_2008_10645, Bouboulis_2011_10643}, RLS \cite{Vae06, Engel_2004_11231}, APSM \cite{Slavakis_2008_9258, Slavakis_2011_11460} and other related methods \cite{Slavakis_2008_9257}, as well as online implementations of SVMs \cite{Shalev-Shwartz2011}, focusing on the primal formulation of the task.  Henceforth, we will consider online learning tasks based on the training sequences of the form $\cD = \{(\bx_n, y_n),\; n=1,2,\dots\}$, where $\bx_n\in\R^d$ and $y_n\in\R$. The goal of the assumed learning tasks is to learn a non-linear input-output dependence, $y=f(\bx)$, $f\in\cH$, so that to minimize a preselected cost. Note that these types of tasks include both classification (where $y_n = \pm 1$) and regression problems (where $y_n\in\R$). Moreover, in the online setting, the  data are assumed to arrive sequentially.

As a typical example of these tasks, we consider the KLMS, which is one of the simplest and most representative methods of this kind. Its goal is to learn $f$, so that to minimize the MSE, i.e., $\cL(f) = E[(y - f(\bx))^2]$. Computing the gradient of $\cL$ and estimating it via the current set of observations (in line with the stochastic approximation rationale, e.g., \cite{Theo_ML}), the estimate at the next iteration, employing the  gradient descent method, becomes
$f_n = f_{n-1} + \mu \epsilon_n \kappa(\bx_n,\cdot)$,
where $\epsilon_n=y_n - f_{n-1}(\bx_n)$ and $\mu$ is the step-size (see, e.g., \cite{Theo_ML, EREF2, Liu_2010_10644} for more). Assuming that the initial estimate is zero, the solution after $n-1$ steps turns out to be
\begin{align}\label{KLMS_solution}
f_{n-1} = \sum_{i=1}^{n-1} \alpha_i \kappa(\cdot,\bx_i),
\end{align}
where $\alpha_i = \mu \epsilon_i$. Observe that this is in line with the representer theorem. Similarly, the system's output can be estimated as $f_{n-1}(\bx_n) = \sum_{i=1}^{n-1} \alpha_i \kappa(\bx_n,\bx_i)$. Clearly, this linear expansion grows indefinitely as $n$ increases; hence the original form of KLMS is impractical. Typically, a sparsification strategy is adopted to bound the size of the expansion \cite{RichBerm, RichBerm2, QKLMS}. In these methods, a specific criterion is employed to decide whether a particular point, $\bx_n$, is to be included to the expansion, or (if that point is discarded) how its respective output $y_n$ can be exploited to update the remaining weights of the expansion. There are also methods that can remove specific points from the expansion, if their information becomes obsolete, in order to increase the tracking ability of the algorithm \cite{SOQKLMS}.

\subsection{Approximating the Kernel with random Fourier Features}\label{SEC:RFF}
Usually, kernel-based learning methods involve a large number of kernel evaluations between training samples. In the batch mode of operation, for example, this means that a large kernel matrix has to be computed, increasing the computational cost of the method significantly.
Hence, to alleviate the computational burden, one common approach is to use some sort of approximation of the kernel evaluation. The most popular techniques of this category are the Nystr\"{o}m method \cite{Nystrom_kernel, Nystrom2} and the random Fourier features approach \cite{RahimiRecht, RahimiRecht2}; the latter fits naturally to the online setting. Instead of relying on the implicit lifting, $\Phi$, provided by the kernel trick, Rahimi and Recht in \cite{RahimiRecht} proposed to map the input data to a finite-dimensional Euclidean space (with dimension lower than $\cH$ but larger than the input space) using a randomized feature map $\bz_\Omega: \R^d \rightarrow \R^D$, so that the kernel evaluations can be approximated as $\kappa(\bx_n, \bx_m) \approx \bz_\Omega(\bx_n)^T \bz_\Omega(\bx_m)$.
The following theorem plays a key role in this procedure.

\begin{theorem}\label{THE:rff}
Consider a shift-invariant positive definite kernel $\kappa(\bx-\by)$ defined on $\R^d$ and its Fourier transform $p(\bomega) = \frac{1}{(2\pi)^d}\int_{\R^d} \kappa(\bdelta) e^{-i\bomega^T\bdelta} d\bdelta$, which (according to Bochner's theorem) it can be regarded as a \textbf{probability density} function. Then, defining $z_{\bomega, b}(\bx) = \sqrt{2}\cos(\bomega^T\bx + b)$, it turns out that
\begin{align}
\kappa(\bx-\by) = E_{\bomega, b}[z_{\bomega, b}(\bx) z_{\bomega, b}(\by)],
\end{align}
where $\bomega$ is drawn from $p$ and $b$ from the uniform distribution on $[0,2\pi]$.
\end{theorem}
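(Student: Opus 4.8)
The plan is to start from Bochner's theorem, which is precisely the ingredient that lets us treat $p$ as a probability density. Since $p$ is by definition the Fourier transform of $\kappa$, inverting the transform yields the integral representation
\begin{align}
\kappa(\bdelta) = \int_{\R^d} p(\bomega) e^{i\bomega^T\bdelta}\, d\bomega = E_{\bomega}\!\left[e^{i\bomega^T\bdelta}\right],
\end{align}
where $\bomega$ is drawn according to $p$. Substituting $\bdelta = \bx - \by$ recasts the kernel as the expectation $E_{\bomega}[e^{i\bomega^T(\bx-\by)}]$ of a single complex exponential. Since $\kappa$ is real-valued, its Fourier transform $p$ is symmetric, i.e. $p(\bomega) = p(-\bomega)$, so the imaginary (sine) contribution integrates to zero and I may replace the exponential by its real part, obtaining $\kappa(\bx-\by) = E_{\bomega}[\cos(\bomega^T\bx - \bomega^T\by)]$.

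The decisive step is to show that the auxiliary random phase $b$, uniform on $[0,2\pi]$, reproduces exactly this cosine in expectation. Expanding the product straight from the definition of $z_{\bomega,b}$,
\begin{align}
z_{\bomega,b}(\bx) z_{\bomega,b}(\by) = 2\cos(\bomega^T\bx + b)\cos(\bomega^T\by + b),
\end{align}
I would apply the product-to-sum identity $2\cos A\cos B = \cos(A-B) + \cos(A+B)$ with $A = \bomega^T\bx + b$ and $B = \bomega^T\by + b$. The difference term is $\cos(\bomega^T\bx - \bomega^T\by)$, free of $b$, while the sum term is $\cos(\bomega^T(\bx+\by) + 2b)$. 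Integrating the latter against the uniform density of $b$ over a full period annihilates it, since $\int_0^{2\pi}\cos(c + 2b)\,db = 0$ for any constant $c$. Hence $E_b[z_{\bomega,b}(\bx) z_{\bomega,b}(\by)] = \cos(\bomega^T(\bx - \by))$ for every fixed $\bomega$.

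Finally, I would take the outer expectation over $\bomega \sim p$ and use the independence of the draws of $\bomega$ and $b$ together with the tower property, which splices the two steps into $E_{\bomega,b}[z_{\bomega,b}(\bx) z_{\bomega,b}(\by)] = E_{\bomega}[\cos(\bomega^T(\bx-\by))] = \kappa(\bx-\by)$, as claimed. I expect the only genuinely delicate point to be the reality reduction in the first step: one must justify that the symmetry of $p$ (equivalently, that $\kappa$ is even and real) discards the sine part, and recognize that the phase randomization in the second step is exactly what keeps the estimator real and symmetric without reintroducing that spurious high-frequency sum term. Everything else reduces to elementary trigonometric identities and an integral of a cosine over a full period.
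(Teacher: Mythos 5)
Your argument is correct. Note, however, that the paper itself offers no proof of this theorem: it is quoted directly from Rahimi and Recht \cite{RahimiRecht}, so there is no in-paper derivation to compare against. What you have written is precisely the standard argument from that reference: Bochner's theorem plus Fourier inversion gives $\kappa(\bdelta)=E_{\bomega}[e^{i\bomega^T\bdelta}]$, the evenness of $p$ (equivalently, realness of $\kappa$) kills the sine part, and the product-to-sum identity together with the vanishing of $\int_0^{2\pi}\cos(c+2b)\,db$ shows that the random phase $b$ removes the spurious $\cos(\bomega^T(\bx+\by)+2b)$ term while preserving $\cos(\bomega^T(\bx-\by))$. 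The only point worth making explicit is the normalization implicit in calling $p$ a probability density, namely $\int_{\R^d}p(\bomega)\,d\bomega=\kappa(\bzero)=1$; Bochner's theorem only guarantees $p\ge 0$, and the unit total mass is what makes the expectation interpretation (and hence the final identity) come out without a multiplicative constant. All steps are otherwise elementary and sound.
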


Following Theorem \ref{THE:rff}, we choose to approximate $\kappa(\bx_n-\bx_m)$ using $D$ random Fourier features, $\bomega_1, \bomega_2, \dots, \bomega_D$, (drawn from $p$) and $D$ random numbers, $b_1, b_2, \dots, b_D$ (drawn uniformly from $[0, 2\pi]$) that define a sample average:
\begin{align}\label{EQ:approx1}
\kappa(\bx_n-\bx_m) \approx \frac{1}{D}\sum_{i=1}^D z_{\bomega_i, b_i}(\bx_m) z_{\bomega_i, b_i}(\bx_n).
\end{align}
Evidently, the larger $D$ is, the better this approximation becomes (up to a certain point). Details on the quality of this approximation can be found in \cite{RahimiRecht, RahimiRecht2, Sutherland, RFF_nystrom}.  We note that for the Gaussian kernel, which is employed throughout the paper, the respective Fourier transform is
\begin{align}\label{EQ:fourier_of_gaussian}
p(\bomega) = \left(\sigma/\sqrt{2\pi}\right)^D e^{-\frac{\sigma^2\|\bomega\|^2}{2}},
\end{align}
which is actually the multivariate Gaussian distribution with mean $\bzero_D$ and covariance matrix $\frac{1}{\sigma^2}\bI_D$.

We will demonstrate how this method can be applied using the KLMS paradigm. To this end, we define the map $\bz_\Omega:\R^d \rightarrow \R^D$ as follows:
\begin{align}\label{EQ:Z_map}
\bz_\Omega(\bu) = \sqrt{\frac{2}{D}}\left(\begin{matrix} \cos(\bomega_1^T\bu +b_1) \cr \vdots \cr
\cos(\bomega_D^T\bu +b_D) \end{matrix}\right),
\end{align}
where $\Omega$ is the $(d+1)\times D$ matrix defining the random Fourier features of the respective kernel, i.e.,
\begin{align}\label{EQ:omega}
\Omega = \left(
\begin{matrix}
\bomega_1 & \bomega_2 & ...  & \bomega_D\cr
b_1  & b_2 & ... & b_D
\end{matrix}
\right),
\end{align}
provided that $\bomega$'s and $b$'s are drawn as described in theorem \ref{THE:rff}. Employing this notation, it is straightforward to see that \eqref{EQ:approx1} can be recast as
$\kappa(\bx_n-\bx_m) \approx \bz_\Omega(\bx_m)^T \bz_\Omega(\bx_n)$.
Hence, the output associated with observation $\bx_n$ can be approximated as
\begin{align}\label{EQ:general_output_approx}
f_{n-1}(\bx_n) &\approx 
 \left(\sum_{i=1}^{n-1} \alpha_i \bz_\Omega(\bx_i)\right)^T \bz_\Omega(\bx_n).
\end{align}
It is a matter of elementary algebra to see that \eqref{EQ:general_output_approx} can be equivalently derived by approximating the system's output as $f(\bx) \approx \btheta^T \bz_\Omega(\bx)$, initializing $\btheta_0$ to $\bZero_D$ and iteratively applying the following gradient descent type update: $\btheta_n = \btheta_{n-1} + \mu e_n\bz_\Omega(\bx_n)$.

Clearly, the procedure described here, for the case of the KLMS, can be applied to any other gradient-type kernel based method. It has the advantage of modeling the solution as a fixed size vector, instead of a growing sum, a property that is quite helpful in distributed environments, as it will be discussed in section \ref{SEC:Distributed}.

\section{Distributed kernel-based Learning}\label{SEC:Distributed}
In this section, we discuss the problem of online learning in RKHS over distributed networks. Specifically, we consider $K$ connected nodes, labeled $k\in\cN=\{1,2,\dots K\}$, which operate in cooperation with their neighbors to solve a specific task. Let $\cN_k\subseteq\cN$ denote the neighbors of node $k$. The network topology is represented as an undirected \textit{connected} graph, consisting of $K$ vertices (representing the nodes) and a set of edges connecting the nodes to each other (i.e., each node is connected to its neighbors). We assign a nonnegative weight $a_{k,l}$ to the edge connecting node $k$ to $l$. This weight is used by $k$ to scale the data transmitted from $l$ and vice versa. This can be interpreted as a measure of the confidence level that node $k$ assigns to its interaction with node $l$. We collect all coefficients into a $K\times K$ symmetric matrix $A = (a_{k,l})$, such that the entries of the $k$-th row of $A$ contain the coefficients used by node $k$ to scale the data arriving from its neighbors. We make the additional assumption that $A$ is doubly stochastic, so that the weights of all incoming and outgoing ``transmissions" sum to 1. A common choice, among others, for choosing these coefficients, is the \textit{Metropolis rule}, in which the weights equal to:
\begin{align*}
 a_{k,l}= \begin{cases} \frac{1}{\max\lbrace\vert\cN_k\vert,\vert\cN_l\vert\rbrace}, \ &\mathrm{if} \ l\in\cN_k, \ \mathrm{and} \ l\neq k\\ 1- \sum_{i\in\cN_k \setminus {k}} a_{k,i}, \ &\mathrm{if} \ l=k\\ 0, &\ \mathrm{otherwise}. \end{cases}
\end{align*}
Finally, we assume that each node, $k$, receives streaming data $\{(\bx_{k,n}, y_{k,n}), n=1,2,\dots\}$, that are generated from an input-output relationship of the form $y_{k,n} = f(\bx_{k,n}) + \eta_{k,n}$, where $\bx_{k,n}\in\R^d$, $y_{k,n}$ belongs to $\R$ and $\eta_{k,n}$ represents the respective noise, for the regression task. The goal is to obtain an estimate of $f$. For classification, $y_{n,k}=\phi(f(\bm{x}_{k,n}))$, where, $\phi$ is a thresholding function; here we assume that $y_{n,k}\in \{-1,1\}$. Once more, the goal is to optimally estimate the classifier function $f$.

Each one of the nodes aims to estimate $f\in\cH$ by minimizing a specific convex cost function, $\cL(\bx,y,f)$, using a (sub)gradient descent approach.
We employ a simple \textit{Combine-Then-Adapt} (CTA) rationale, where at each time instant, $n$, each node, $k$, a) receives the current estimates, $f_{l,n-1}$, from all neighbors (i.e., from all nodes $l\in\cN_k$), b) combines them to a single solution, $\psi_{k,n-1} = \sum_{l\in\cN_k}a_{k,l} f_{l, n-1}$ and c) apply a step update procedure:
\begin{align*}
f_n=\psi_{k,n-1} -\mu_n\nabla_f\cL(\bx_n,y_n,\psi_{k,n-1}).
\end{align*}
The implementation of such an approach in the context of RKHS presents significant challenges. Keep in mind that, the estimation of the solution at each node is not a simple vector, but instead it is a function, which is expressed as a growing sum of kernel evaluations centered at the points observed by the specific node, as in \eqref{KLMS_solution}. Hence, the implementation of a straightforward CTA strategy would require from each node to transmit its entire growing sum (i.e., the coefficients $a_i$ as well as the respective centers $\bx_i$) to all neighbors. This would significantly increase both the communication load among the nodes, as well as the computational cost at each node, since the size of each one of the expansions would become increasingly larger as time evolves (as for every time instant, they gather the centers transmitted by all neighbors).  This is the rationale adopted in \cite{Distributed_KLMS1, Distributed_KLMS2, Distributed_KLMS3} for the case of KLMS.
Clearly, this is far from a practical approach. Alternatively, one could devise an efficient method to sparsify the solution at each node and then merge the sums transmitted by its neighbors. This would require (for example) to search all the dictionaries, transmitted by the neighboring nodes, for similar centers and treat them as a single one, or adopting a single pre-arranged dictionary (i.e., a specific set of centers) for all nodes and then fuse each observed point with the best-suited center. However, no such strategy has appeared in the respective literature, perhaps due to its increased complexity and lack of a theoretical elegance.

In this paper, inspired by the random Fourier features approximation technique, we approximate the desired input-output relationship as $y = \btheta^T\bz_\Omega(\bx)$ and propose a two step procedure: a) we map each observed point $(\bx_{k,n}, y_{k,n})$ to $(\bz_\Omega(\bx_{k,n}), y_{k,n})$ and then b) we adopt a simple linear CTA diffusion strategy on the transformed points. Note that in the proposed scheme, each node aims to estimate a vector $\btheta\in\R^D$ by minimizing a specific (convex) cost function, $\cL(\bx,y,\btheta)$. Here, we imply that the model can be closely approximated by $y_{k,n} \approx \btheta^T\bz_\Omega(\bx_{k,n}) + \eta_{k,n}$, for regression, and $y_{k,n}\sim \phi(\bm{\theta}^T\bm{z}_\Omega(\bm{x}_{k,n}))$ for classification, for all $k,n$, for some $\btheta$. We emphasize that $\cL$ need not be differentiable.  Hence, a large family of loss functions can be adopted. For example:
\begin{itemize}
\item Squared error loss: $\cL(\bx,y,\btheta) = (y - \btheta^T\bx)^2$.
\item Hinge loss:  $\cL(\bx,y,\btheta) = \max(0, 1-y\btheta^T\bx)$.
\end{itemize}
We end up with the following generic update rule:
\begin{align}
\bpsi_{k,n} &= \sum_{l\in\cN_k} a_{k,l} \btheta_{l,n-1}, \label{EQ:CTA1}\\
\btheta_{k,n} &= \bpsi_{k,n}   - \mu_{k,n} \nabla_{\btheta} \cL(\bz_\Omega(\bx_{k,n}), y_{k,n},\bpsi_{k,n}),\label{EQ:CTA2}
\end{align}
where $\nabla_{\btheta} \cL(\bz_\Omega(\bx_{k,n}), y_{k,n},\bpsi_{k,n})$ is the gradient, or any subgradient of $\cL(\bx,y,\btheta)$ (with respect to $\btheta$), if the loss function is not differentiable. Algorithm \ref{Alg:RFF-DOKL} summarizes the aforementioned procedure.
The advantage of the proposed scheme is that each node transmits a single vector (i.e., its current estimate, $\btheta_{k,n}$) to its neighbors, while the merging of the solutions requires only a straightforward summation.

\begin{algorithm}[t]
\caption{Random Fourier Features Distributed Online Kernel-based Learning (RFF-DOKL).}\label{Alg:RFF-DOKL}
\begin{algorithmic}
\State $D = \{(\bx_{k,n}, y_{k,n}), k=1,2\dots,K,\;n=1,2,\dots\}$ \Comment{Input}
\State Select a specific shift invariant (semi)positive definite kernel, a specific loss function $\cL$ and a sequence of possible variable learning rates $\mu_n$. Each node generates the same matrix $\Omega$ as in \eqref{EQ:omega}.
\State $\btheta_{k,0} \gets \bZero_D$, for all $k$. \Comment{Initialization}
\For{$n=1,2,3, ...$}
\For{each node $k$}
\State $\bpsi_{k,n} = \sum_{l\in\cN_k} a_{k,l} \btheta_{l,n-1}$.
\State $\btheta_{k,n} = \bpsi_{k,n}  - \mu_{k,n} \nabla_{\btheta} \cL(\bz_\Omega(\bx_{k,n}), y_{k,n},\bpsi_{k,n})$.
\EndFor
\EndFor
\end{algorithmic}
\end{algorithm}

\subsection{Consensus and regret bound}\label{SEC:CONSENSUS}
In the sequel, we will show that, under certain assumptions, the proposed scheme achieves asymptotic consensus and that the corresponding regret bound grows sublinearly with the time. It can  readily be seen that \eqref{EQ:CTA1}-\eqref{EQ:CTA2}
can be written more compactly (for the whole network) as follows:
\begin{equation}
\underline{\btheta}_n =\bA\underline{\btheta}_{n-1}-\bbM_n \bG_n ,\label{EQ:CTA}
\end{equation}
where $\underline{\btheta}_n := (\btheta_{1,n}^T,\ldots, \btheta_{K,n}^T)^T \in \R^{KD}$, $\bbM_n := \mathrm{diag}\{\mu_{1,n},\ldots,\mu_{K,n} \}\otimes I_D$,
$\bG_n: = [ (\bu_{1,n}^T,\ldots, \bu_{K,n}^T]^T\in\R^{KD}$, where $\bu_{k,n} = \nabla\cL(\bz_\Omega(\bx_{k,n}),y_{k,n},\bpsi_{k,n})$, and $\bA := A\otimes I_D$. The necessary assumptions are the following:

\begin{myassm}
The step size is time decaying and is bounded by the inverse square root of time, i.e.,
 $\mu_{k,n}=\mu_n \leq \mu n^{-1/2}$.
\end{myassm}
\begin{myassm}
The norm of the transformed input is bounded, i.e.,
$\exists U_1$ such that
 $\|\bz_{\Omega}(\bx_{k,n})\|\leq U_1, \ \forall k\in\cN, \forall n\in\N$.
 Furthermore, $y_{k,n}$  is bounded, i.e., $|y_{k,n}|\leq V\ \forall k\in\cN, \forall n\in\mathbb{N}$ for some $V>0$.
\end{myassm}
\begin{myassm}
The estimates are bounded, i.e., $\exists U_2$
 s.t. $\| \btheta_{k,n}\|\leq U_2,\ \forall k\in\cN, \forall n\in\N$.
\end{myassm}
\begin{myassm}
The matrix comprising the combination weights, i.e., $A$, is doubly stochastic (if the weights are chosen with respect to the Metropolis rule, this condition is met).
\end{myassm}

\noindent Note that assumptions 2 and 3 are valid for most of the popular cost functions. As an example, we can study the squared error loss, i.e., $\cL(\bx,y,\btheta) = 1/2(y - \btheta^T\bx)^2$, where:
\begin{align*}
\| \nabla \cL(\bz_\Omega(\bx),y,\btheta)\| &\leq | y | \|\bz_\Omega(\bx)\Vert + \|\btheta\| \|\bz_\Omega(\bx)\|^2 \\
&\leq  V U_1 +  U_1^2U_2.
\end{align*}
Following similar arguments, we can also prove that many other popular cost functions (e.g., the hinge loss, the logistic loss, e.t.c.) have bounded gradients too.

\begin{proposition}[Asymptotic Consensus]\label{PRO:consensus}
All nodes converge to the same solution.
\end{proposition}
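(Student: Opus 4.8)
The plan is to show that the \emph{disagreement} among the nodes vanishes, i.e. $\|\btheta_{k,n}-\btheta_{l,n}\|\to 0$ for every pair $k,l$, by tracking the component of $\underline{\btheta}_n$ that is orthogonal to the consensus subspace. First I would introduce the projection onto the consensus subspace, $P:=\frac{1}{K}\bm{1}_K\bm{1}_K^T\otimes I_D$, together with its complement $P^\perp:=I_{KD}-P$. Because $A$ is symmetric and doubly stochastic (Assumption~4), $A\bm{1}_K=\bm{1}_K$ and $\bm{1}_K^TA=\bm{1}_K^T$, which yield the commutation identities $\bA P=P\bA=P$, and hence $\bA P^\perp=P^\perp\bA$. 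Applying $P^\perp$ to \eqref{EQ:CTA} and writing $\underline{\btheta}_n^\perp:=P^\perp\underline{\btheta}_n$ then gives the decoupled recursion
\begin{align*}
\underline{\btheta}_n^\perp=\bA\,\underline{\btheta}_{n-1}^\perp-P^\perp\bbM_n\bG_n,
\end{align*}
so that the consensus ($P$) component evolves independently of the disagreement ($P^\perp$) component, and only the latter needs to be controlled.

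The next step is to turn this into a scalar recursion for $\|\underline{\btheta}_n^\perp\|$. Since $A$ is symmetric, doubly stochastic, and the graph is connected (with positive self-weights under the Metropolis rule, so that the spectrum stays away from $-1$), the eigenvalue $1$ is simple and the restriction of $A$ to $\bm{1}_K^\perp$ has spectral norm $\beta:=\max(|\lambda_2|,|\lambda_K|)<1$; consequently $\|\bA\,\underline{\btheta}_{n-1}^\perp\|\le\beta\|\underline{\btheta}_{n-1}^\perp\|$. For the forcing term, $P^\perp$ is nonexpansive, and, as already illustrated in the excerpt for the squared and hinge losses, Assumptions~2--3 guarantee a uniform gradient bound $\|\bu_{k,n}\|\le G$; combined with the step-size rule $\mu_{k,n}\le\mu n^{-1/2}$ of Assumption~1 this gives $\|P^\perp\bbM_n\bG_n\|\le\|\bbM_n\bG_n\|\le \mu G\sqrt{K}\,n^{-1/2}=:C\,n^{-1/2}$. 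Hence
\begin{align*}
\|\underline{\btheta}_n^\perp\|\le\beta\,\|\underline{\btheta}_{n-1}^\perp\|+\frac{C}{\sqrt{n}}.
\end{align*}

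Finally I would unroll this recursion to obtain $\|\underline{\btheta}_n^\perp\|\le\beta^n\|\underline{\btheta}_0^\perp\|+C\sum_{j=1}^n\beta^{n-j}j^{-1/2}$ and argue that both terms tend to $0$. The first vanishes geometrically. For the convolution sum, the main obstacle is that the forcing sequence $j^{-1/2}$ is \emph{not} summable, so one cannot invoke absolute convergence; instead I would exploit the geometric kernel by splitting the sum at a threshold $N_1$ chosen so that $j^{-1/2}<\epsilon$ for $j>N_1$. The head $\sum_{j\le N_1}\beta^{n-j}j^{-1/2}\le N_1\beta^{\,n-N_1}\to0$ owing to the exponential factor, while the tail is bounded by $\epsilon\sum_{j>N_1}\beta^{n-j}\le \epsilon/(1-\beta)$; letting $n\to\infty$ and then $\epsilon\to0$ shows the whole sum vanishes. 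Therefore $\|\underline{\btheta}_n^\perp\|\to0$, and since each block $\btheta_{k,n}-\bar{\btheta}_n$ (with $\bar{\btheta}_n:=\frac{1}{K}\sum_{l}\btheta_{l,n}$) is a subvector of $\underline{\btheta}_n^\perp$, the triangle inequality gives $\|\btheta_{k,n}-\btheta_{l,n}\|\to0$, which is the claimed asymptotic consensus.
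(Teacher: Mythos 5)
Your proof is correct and follows essentially the same route as the paper: both project the network recursion onto the orthogonal complement of the consensus subspace (your $P^\perp=I_{KD}-\frac{1}{K}\bm{1}_K\bm{1}_K^T\otimes I_D$ is exactly the paper's $\bI_{KD}-\bB\bB^T$), exploit that $\bA$ contracts on that complement with factor $\beta=\|\bX\|<1$, and drive the disagreement to zero using the vanishing forcing term $\bbM_n\bG_n$ guaranteed by the decaying step size and bounded gradients. The only difference is that you project before unrolling and spell out explicitly the splitting argument for the convolution $\sum_j\beta^{n-j}j^{-1/2}$, whereas the paper unrolls first and delegates that final limit to Lemma 2 of the cited reference.
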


\begin{proof}
Consider a $KD\times KD$ \textit{consensus} matrix $\bA$ as in \eqref{EQ:CTA}. As $\bA$ is doubly stochastic, we have the following  \cite{Cavalcante}:
\begin{itemize}
\item $\|\bA \| = 1$.
\item Any consensus matrix $\bA$ can be decomposed as
\begin{align}
\bA= \bX + \bB \bB^T,\label{EQ:decomp}
\end{align}
where $\bB=[\bb_{1},\ldots,\bb_{D}]$ is an $KD\times D$ matrix,
and $\bb_{k}=1/\sqrt{K}(\bm{1}\otimes\bm{e}_{k})$, where
$\bm{e}_{k}$, $k=1,\dots, D$ represent the standard basis of $\R^D$ and $\bX$ is a $KD\times KD$ matrix for which it holds that $\left\| \bX \right\| <1$.
\item  $\bA \underline{\breve{\btheta}}=\underline{\breve{\btheta}}$, for all $\underline{\breve{\btheta}}\in \mathcal{O}:= \lbrace\underline{\btheta} \in \R^{KD}: \ \underline{{\btheta}}= [ \btheta^T,\ldots,\btheta^T ]^T, \  \btheta \in \R^D \rbrace$. The subspace $\mathcal{O}$ is the so called consensus subspace of dimension $D$, and $\bm{b}_k,\ k=1,\ldots,D$, constitute a basis for this space. Hence, the orthogonal projection of a vector,  $\underline{\btheta}$, onto this linear subspace is given by $P_{\mathcal{O}}(\underline{\btheta}):=\bB\bB^T\underline{\btheta}$, for all $\underline{\btheta}\in\R^{KD}$.
\end{itemize}

In \cite{Cavalcante}, it has been proved that, the algorithmic scheme achieves asymptotic consensus, i.e., $\|\btheta_{k,n}-\btheta_{l,n}\|\rightarrow 0$, as $n\rightarrow \infty,$ for all  $k,l\in\cN$, if and only if
$\lim_{n\rightarrow\infty}\|\bm{\underline{\btheta}}_n-P_{\mathcal{O}}(\bm{\underline{\btheta}}_n)\|={0}$.
We can easily check that the quantity
\begin{align}
\underline{\bm{r}}_n:=  \bm{\underline{\theta}}_{n+1}-\bm{A}\bm{\underline{\theta}}_n = -\bbM_{n+1}\bG_{n+1}.
\label{EQ:cons21}
\end{align}
approaches $0$, as $n\rightarrow\infty$, since $\lim_{n\to \infty}\bm{M}_n = \bm{O}_{KD}$ (assumption 1) and the matrix $\bm{G}_n$  is bounded for all $n$. Rearranging the terms of \eqref{EQ:cons21} and iterating over $n$, we have:
\begin{align*}
\bm{\underline{\theta}}_{n+1} &=\bA\bm{\underline{\theta}}_n+\bm{\underline{r}}_n
= \bA\bA \bm{\underline{\theta}}_{n-1} + \bA\bm{\underline{r}}_{n-1}+\bm{\underline{r}}_n=\ldots \\
&= \bA^{n+1}\bm{\underline{\theta}}_0 +   \sum_{j=0}^{n}  \bA^{n-j}\bm{\underline{r}}_{j}.
\end{align*}
If we left-multiply the previous equation by $(\bI_{KD}-\bB\bB^{T})$ and  follow similar steps as in \cite[Lemma 2]{Cavalcante}, it can be verified  that
$\underset{n \rightarrow \infty}{\lim} \|\left(\bm{I}_{Km}-\bB\bB^{T}\right)\bm{\underline{\theta}}_{n+1}\|=0$,
which completes our proof.
\end{proof}

\begin{proposition}\label{PRO:regret}
Under   assumptions 1-4 (and a cost function with bounded gradients) the networkwise regret is bounded by
\begin{align*}
\sum_{i=1}^N\sum_{k\in\cN}(\cL(\bx_{k,i},y_{k,i},\bpsi_{k,i})-\cL(\bx_{k,i},y_{k,i},\bm g)) \leq \gamma\sqrt{N} +\delta,
\end{align*}
for all $\bm{g}\in\mathcal{B}_{[\bZero_D,U_2]}$, where   $\gamma, \ \delta$ are positive constants and $\mathcal{B}_{[\bZero_D,U_2]}$ is the closed
ball with center $\bZero_D$ and radius $U_2$.
\end{proposition}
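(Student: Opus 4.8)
The plan is to run the classical online convex optimization regret argument at every node simultaneously and then glue the per-node bounds together using the double stochasticity of $A$. Fix a comparator $\bg\in\mathcal{B}_{[\bZero_D,U_2]}$ and abbreviate $\cL_{k,n}(\bv):=\cL(\bz_\Omega(\bx_{k,n}),y_{k,n},\bv)$ and $\bu_{k,n}:=\nabla_\btheta \cL_{k,n}(\bpsi_{k,n})$, the (sub)gradient used in \eqref{EQ:CTA2}. Convexity of $\cL_{k,n}$ gives the instantaneous inequality $\cL_{k,n}(\bpsi_{k,n})-\cL_{k,n}(\bg)\le \bu_{k,n}^T(\bpsi_{k,n}-\bg)$, while expanding the adapt step $\btheta_{k,n}=\bpsi_{k,n}-\mu_n\bu_{k,n}$ as
\begin{align*}
\|\btheta_{k,n}-\bg\|^2 = \|\bpsi_{k,n}-\bg\|^2 - 2\mu_n\bu_{k,n}^T(\bpsi_{k,n}-\bg) + \mu_n^2\|\bu_{k,n}\|^2
\end{align*}
and solving for the inner product converts this into
\begin{align*}
\cL_{k,n}(\bpsi_{k,n})-\cL_{k,n}(\bg)\le \frac{1}{2\mu_n}\left(\|\bpsi_{k,n}-\bg\|^2-\|\btheta_{k,n}-\bg\|^2\right)+\frac{\mu_n}{2}\|\bu_{k,n}\|^2.
\end{align*}

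The one genuinely distributed step is to control the combine stage \eqref{EQ:CTA1}. Since the rows of $A$ sum to one, $\bpsi_{k,n}$ is a convex combination of $\{\btheta_{l,n-1}\}_{l\in\cN_k}$, so Jensen's inequality yields $\|\bpsi_{k,n}-\bg\|^2\le\sum_{l\in\cN_k}a_{k,l}\|\btheta_{l,n-1}-\bg\|^2$. Summing over all nodes $k$ and using that the columns of $A$ also sum to one (Assumption 4, $\sum_k a_{k,l}=1$) collapses the double sum into the network-level contraction $\sum_{k\in\cN}\|\bpsi_{k,n}-\bg\|^2\le\sum_{l\in\cN}\|\btheta_{l,n-1}-\bg\|^2$. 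Writing $d_n:=\sum_{k\in\cN}\|\btheta_{k,n}-\bg\|^2$ and summing the per-node bound over $k$ (legitimate because $\mu_{k,n}=\mu_n$ is common to all nodes by Assumption 1) gives the tidy recursion
\begin{align*}
\sum_{k\in\cN}\left(\cL_{k,n}(\bpsi_{k,n})-\cL_{k,n}(\bg)\right)\le\frac{1}{2\mu_n}\left(d_{n-1}-d_n\right)+\frac{\mu_n}{2}\sum_{k\in\cN}\|\bu_{k,n}\|^2.
\end{align*}

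It remains to sum over $n=1,\dots,N$ and show both pieces are $O(\sqrt N)$. For the first, I take $\mu_n=\mu n^{-1/2}$ and apply Abel summation; after bounding each $d_m\le 4KU_2^2=:C$ (Assumption 3, via $\|\btheta_{k,m}\|\le U_2$ and $\|\bg\|\le U_2$) and using that $d_0=K\|\bg\|^2\le C$, the telescoped increments of $1/\mu_n$ leave a bound of order $C/\mu_N=O(\sqrt N/\mu)$ plus an $O(1)$ term. For the second, the bounded-gradient hypothesis supplies $\|\bu_{k,n}\|\le G$ (e.g. $G=VU_1+U_1^2U_2$ for the squared loss, as already checked in the text, and analogously for the hinge and logistic losses), and with $\sum_{n=1}^N\mu_n\le\mu\sum_{n=1}^N n^{-1/2}\le 2\mu\sqrt N$ this term is at most $KG^2\mu\sqrt N$. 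Collecting the two contributions gives the claim with $\gamma=\tfrac{2KU_2^2}{\mu}+KG^2\mu$ and $\delta$ the leftover constant, and since $\bg$ was an arbitrary element of $\mathcal{B}_{[\bZero_D,U_2]}$ the bound holds uniformly over that ball.

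The single-node ingredients (convexity, the square-completion identity, and the $\sum n^{-1/2}=O(\sqrt N)$ estimate) are routine, so the step I expect to be load-bearing is the network contraction $\sum_k\|\bpsi_{k,n}-\bg\|^2\le\sum_l\|\btheta_{l,n-1}-\bg\|^2$: row stochasticity alone gives Jensen node by node, but it is precisely column stochasticity that makes the summed right-hand side equal $d_{n-1}$ rather than a reweighted quantity, which is what allows the $d_{n-1}-d_n$ differences to telescope across time. A secondary point I would flag explicitly is that the $\sqrt N$ rate of the first sum requires $\mu_n$ to be of exact order $n^{-1/2}$ (a lower bound on the step size, beyond the upper bound stated in Assumption 1); I would therefore adopt $\mu_n=\mu n^{-1/2}$ outright.
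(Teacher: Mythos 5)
Your proof is correct and follows essentially the same route as the paper's: the square-completion plus convexity per-round bound, the network-level contraction from double stochasticity (you via Jensen plus column sums, the paper via $\|\bA\|=1$ for the doubly stochastic consensus matrix), and Abel summation against the uniformly bounded iterates are the same argument in slightly different notation. Your closing flag is also well taken: Assumption 1 only upper-bounds the step size, yet the telescoping step requires $1/\mu_{N+1}=O(\sqrt{N})$, and indeed the paper's own derivation substitutes $1/\mu_{N+1}=\sqrt{N+1}/\mu$, i.e., it implicitly adopts $\mu_n=\mu n^{-1/2}$ exactly, just as you propose.
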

\begin{proof} See appendix \ref{SEC:proof_regret}. \end{proof}

\begin{remark}\label{REM:properties1}
It is worth pointing out that the theoretical properties, which were stated before, are complementary. In particular, the consensus property (Proposition \ref{PRO:consensus}) indicates that the nodes converge to the \textbf{same} solution and the sublinearity of the regret implies that on  average the algorithm performs as well as the best fixed strategy. In fact, without the regret related proof we cannot characterize  the solution in which the nodes converge.
\end{remark}

\subsection{Diffusion SVM (Pegasos) Algorithm}\label{SEC:DPeg}
The case of the regularized hinge loss function, i.e., $\cL(\bx, y, \btheta) = \frac{\lambda}{2}\|\btheta\|^2 + \max\{0, 1-y\btheta^T\bz_\Omega(\bx)\}$, for a specific value of the regularization parameter $\lambda>0$, generates the \textit{Distributed Pegasos} (see \cite{Shalev-Shwartz2011}). Note that the Pegasos solves the SVM task in the primal domain. In this case, the gradient becomes $\nabla_{\btheta}\cL(\bx,y,\btheta) = \lambda\btheta - \bm{I}_{+}(1-y\btheta^T\bz_\Omega(\bx))y\bz_\Omega(\bx)$, where $\bm{I}_{+}$ is the indicator function of $(0,+\infty)$, which takes a value of 1, if its argument belongs in $(0,+\infty)$, and zero otherwise. Hence the step-update equation of algorithm \ref{Alg:RFF-DOKL} becomes:
\begin{align}
\begin{matrix*}[l]
\btheta_{k,n} =& (1 - \frac{1}{n})\bpsi_{k,n-1}\\
&+ \bm{I}_{+}(1-y_n\bpsi_{k,n-1}^T\bz_\Omega(\bx_{k,n}))\frac{y_{k,n}}{\lambda n}\bz_\Omega(\bx_{k,n}),
\end{matrix*}
\end{align}
where, following \cite{Shalev-Shwartz2011}, we have used a decreasing step size, $\mu_n = \frac{1}{\lambda n}$. This scheme satisfies the required assumptions, hence consensus is guaranteed.

We have tested the performance of Distributed-Pegasos versus the non-cooperative Pegasus on
four datasets downloaded from Leon Bottou's LASVM web page \cite{Bottou_webpage}. The chosen datasets are: a) the Adult dataset, b) the Banana dataset (where we have used the first 4000 points as training data and the remaining 1300 as testing data), c) the Waveform dataset (where we have used the first 4000 points as training data and the remaining 1000 as testing data) and d) the MNIST dataset (for the task of classifying the digit 8 versus the rest). The sizes of the datasets are given in Table \ref{TAB:datasets}. In all experiments, we generate random graphs (using MIT's random\_graph routine, see \cite{MIT_Graph}) and compare the proposed diffusion method versus a noncooperative strategy (where each node works independent of the rest). For each realization of the experiments, a different random connected graph with $M=5$ or $M=20$ nodes was generated, with probability of attachment per node equal to 0.2 (i.e, there is a 20\% probability that a specific node $k$ is connected to any other node $l$). The adjacency matrix, $A$, of each graph was generated using the Metropolis rule. For the non-cooperative strategies, we used a graph that connects each node to itself, i.e., $A = I_{5}$ or $A = I_{20}$ respectively. The latter, implies that no information is exchanged between the nodes, thus each node is working alone.
Moreover, for each realization, the corresponding dataset was randomly split into $M$ subsets of equal size (one for every node).

We note that the value of $D$ affects significantly the quality of the approximation via the Fourier features rationale and thus it also affects the performance of the experiments. The value of $D$ must be large enough so that the approximation is good, but not too large so that to the communicational and computational load become affordable. In practice, we can find a value for $D$ so that any further increase results to almost negligible performance variation (see also section \ref{SEC:OKL}). All other parameters were optimized (after trials) to give the lowest number of test errors. Their values are reported on Table \ref{TAB:Peg_param}. The algorithms were implemented in MatLab and the experiments were performed on a i7-3770 machine running at $3.4$GHz with 32 Mb of RAM.
Tables \ref{TAB:DPEG1} and \ref{TAB:DPEG2} report the mean test errors obtained by both procedures. For $M=5$, the mean algebraic complexity of the generated graphs lies between $0.61$ and $0.76$ (different for each experiment), while the corresponding mean algebraic degree lies around 1.8. For $M=20$, the mean algebraic complexity of the generated graphs lies around $0.70$, while the corresponding mean algebraic degree lies around 3.9. The number inside the parentheses indicates the times of data reuse (i.e., running the algorithm again over the same data, albeit with a continuously decreasing step-size $\mu_n$), which has been suggested that improves the classification accuracy of Pegasos (see \cite{Shalev-Shwartz2011}). For example, the number 2 indicates that the algorithm runs over a dataset of double size, that contains the same data pairs twice.
For the three first datasets (Adult, Banana, Waveform) we have run 100 realizations of the experiment, while for the fourth (MNIST) we have run only 10 (to save time). Besides the ADULT dataset, all other simulations show that the distributed implementation significantly outperforms the non-cooperative one. For that particular dataset, we observe that for a single run the non-cooperative strategy behaves better (for $M=20$), but as data reuse increases the distributed implementation reaches lower error floors.

\begin{table}[t]
\scriptsize
\caption{Dataset Information.}\label{TAB:datasets}
\centering
\begin{tabular}{|c|c|c|c|c|}
\hline
Method   &   Adult &  Banana & Waveform  & MNIST\\\hline
Training size & 32562 & 4000 & 4000 & 60000 \\\hline
Testing size & 16282 & 1300 & 1000 & 10000 \\\hline
dimensions  & 123 & 2 & 21 & 784\\\hline
\end{tabular}
\end{table}

\begin{table}[t]
\scriptsize
\caption{Comparing the performances of the Distributed Pegasos versus the non-cooperative Pegasos for graphs with $M=5$ nodes.}\label{TAB:DPEG1}
\centering
\begin{tabular}{|c|c|c|c|c|}
\hline
Method   &   Adult  &  Banana & Waveform  & MNIST\\\hline
Distributed-Pegasos (1)      & 19\%    & 11.80\% & 11.82\% & 0.79\% \\\hline
Distributed-Pegasos (2)      & 17.43\% & 10.84\% & 10.49\% & 0.68\% \\\hline
Distributed-Pegasos (5)      & 15.87\% & 10.34\% & 9.56\%  & 0.59\% \\\hline
Non-cooperative-Pegasos (1)  & 19.11\% & 14.52\% & 13.75\% & 1.42\% \\\hline
Non-cooperative-Pegasos (2)  & 18.31\% & 12.52\% & 12.59\% & 1.19\% \\\hline
Non-cooperative-Pegasos (5)  & 17.29\% & 11.32\% & 11.86\% & 1.01\% \\\hline
\end{tabular}
\end{table}

\begin{table}[t]
\scriptsize
\caption{Comparing the performances of the Distributed Pegasos versus the non-cooperative Pegasos for graphs with $M=20$ nodes.}\label{TAB:DPEG2}
\centering
\begin{tabular}{|c|c|c|c|c|}
\hline
Method   &   Adult  &  Banana & Waveform  & MNIST\\\hline
Distributed-Pegasos (1)      & 24.04\% & 16.38\% & 16.26\% & 1.03\% \\\hline
Distributed-Pegasos (2)      & 22.34\% & 13.23\% & 13.93\% & 0.77\% \\\hline
Distributed-Pegasos (5)      & 18.94\% & 10.83\% & 11.20\% & 0.57\% \\\hline
Non-cooperative-Pegasos (1)  & 20.81\% & 21.74\% & 18.40\% & 2.93\% \\\hline
Non-cooperative-Pegasos (2)  & 20.52\% & 18.64\% & 16.54\% & 2.19\% \\\hline
Non-cooperative-Pegasos (5)  & 19.88\% & 15.96\% & 14.86\% & 1.87\% \\\hline
\end{tabular}
\end{table}

\begin{table}[t]
\scriptsize
\caption{Parameters for each method.}\label{TAB:Peg_param}
\centering
\begin{tabular}{|c|c|c|c|c|}
\hline
Method   &   Adult &  Banana & Waveform  & MNIST\\\hline
Kernel-Pegasos & $\begin{matrix} \sigma=\sqrt{10}\cr \lambda =  0.0000307\end{matrix}$ & $\begin{matrix} \sigma=0.7\cr \lambda = \frac{1}{316}\end{matrix}$ & $\begin{matrix} \sigma=\sqrt{10}\cr \lambda = 0.001\end{matrix}$ & $\begin{matrix} \sigma=4\cr \lambda = 10^{-7}\end{matrix}$  \\\hline
RFF-Pegasos &  $\begin{matrix} \sigma=\sqrt{10}\cr \lambda = 0.0000307 \cr D=2000\end{matrix}$ & $\begin{matrix} \sigma=0.7\cr \lambda = \frac{1}{316} \cr D=200\end{matrix}$ & $\begin{matrix} \sigma=\sqrt{10}\cr \lambda = 0.001\cr D=2000\end{matrix}$ & $\begin{matrix} \sigma=4\cr \lambda = 10^{-7}\cr D=100000\end{matrix}$ \\\hline
\end{tabular}
\end{table}

\subsection{Diffusion KLMS}\label{SEC:DKLMS}
Adopting the squared error in place of $\cL$, i.e., $\cL(\bx,y,\btheta) = (y - \btheta^T\bz_\Omega(\bx))^2$, and estimating the gradient by its current measurement, we take the Random Fourier Features Diffusion KLMS (RFF-DKLMS) and the step update becomes:
\begin{align}
\btheta_{k,n} = \bpsi_{k,n-1} + \mu \varepsilon_{k,n}\bz_\Omega(\bx_{k,n}), \label{EQ:DKLMS_update}
\end{align}
where $\varepsilon_{k,n} = y_n - \bpsi_{k,n-1}^T\bz_\Omega(\bx_{k,n})$. Although proposition \ref{PRO:consensus} cannot be applied here (as it requires a decreasing step-size), we can derive sufficient conditions for consensus following the results of the standard Diffusion LMS \cite{Lopes}. Henceforth, we will assume that the data pairs are generated  by
\begin{align}
y_{k,n} = \sum_{m=1}^M a_m\kappa(\bc_m, \bx_{k,n}) + \eta_{k,n},\label{EQ:graph_input_model}
\end{align}
where $\bc_1,\dots,\bc_M$ are fixed centers, $\bx_{k,n}$ are zero-mean i.i.d, samples drawn from the Gaussian distribution with covariance matrix $\sigma_x^2\bI_d$ and $\eta_{k,n}$ are i.i.d. noise samples drawn from $\cN(0, \sigma_\eta^2)$. Following the RFF approximation rationale (for shift invariant kernels), we can write that
\begin{align*}
y_{k,n} &= \sum_{m=1}^M a_m E_{\bomega,\bm{b}}[z_{\bomega,\bm{b}}(\bc_m)z_{\bomega,\bm{b}}(\bx_{k,n})] + \eta_{k,n} \\
&=\ba^T Z_\Omega^T\bz_{\Omega}(\bx_{k,n}) + \epsilon_{k,n} + \eta_{k,n},\\
&=\btheta_o^T \bz_{\Omega}(\bx_{k,n}) + \epsilon_{k,n} + \eta_{k,n},
\end{align*}
where $Z_\Omega = \left(\bz_{\Omega}(\bc_1), \dots, \bz_{\Omega}(\bc_M)\right)$, $\ba = (a_1, \dots, a_M)^T$, $\btheta_o=Z_\Omega\ba$ and $\epsilon_{k,n}$  is the approximation error between the noise-free component of $y_{k,n}$ (evaluated only by the linear kernel expansion of \eqref{EQ:graph_input_model}) and the approximation of this component using random Fourier features, i.e., $\epsilon_{k,n} = \sum_{m=1}^M a_m \kappa(\bc_m,\bx_{k,n}) - \btheta_o^T\bz_{\Omega}(\bx_{k,n})$.
For the whole network we have the following
\begin{align}
\underline{\by}_n = \bV^T_n\underline{\btheta}_o + \underline{\bm{\epsilon}}_n + \underline{\bm{\eta}}_n,\label{EQ:DKLMS_model}
\end{align}
where
\begin{itemize}
\item $\underline{\by}_n := (y_{1,n}, y_{2,n}, \dots, y_{K,n})^T$,
\item $\bV_n := \diag(\bz_\Omega(\bx_{1,n}), \bz_\Omega(\bx_{2,n}),\dots, \bz_\Omega(\bx_{K,n}))$, is a $DK\times K$ matrix,
\item $\underline{\btheta}_o = \left(\btheta_o^T, \btheta_o^T, \dots, \btheta_o^T\right)^T\in\R^{DK}$,
\item $\underline{\bm{\epsilon}}_n= \left(\epsilon_{1,n}, \epsilon_{2,n}, \dots, \epsilon_{K,n}\right)^T\in\R^{K}$,
\item $\underline{\bm{\eta}}_n = \left(\eta_{1,n}, \eta_{2,n}, \dots, \eta_{K,n}\right)^T\in\R^{K}$.
\end{itemize}
Let $\bx_1,\dots, \bx_K\in\R^d$, $\underline{\by}\in\R^K$, be the random variables that generate the measurements of the nodes; it is straightforward to prove that the corresponding Wiener solution, i.e.,  $\underline{\btheta}_* = \textrm{argmin}_{\underline{\btheta}} E[\|\underline{\by} - \bV^T\underline{\btheta}\|^2]$, becomes
\begin{align}
\underline{\btheta}_* = E[\bV \bV^T]^{-1}E[\bV\underline{\by}],\label{EQ:graph_Wiener}
\end{align}
provided that the autocorrelation matrix $\bm{R}=E[\bV \bV^T]$ is invertible,
where $\bV = \diag(\bz_\Omega(\bx_{1}), \bz_\Omega(\bx_{2}),\dots, \bz_\Omega(\bx_{K}))$ is a $DK\times K$ matrix that collects the transformed random variables for the whole network. Assuming that the input-output relationship of the measurements at each node follows \eqref{EQ:DKLMS_model}, the cross-correlation vector takes the form
\begin{align*}
E[\bV\underline{\by}] &= E[\bV(\bV^T\underline{\btheta}_o + \underline{\bm{\epsilon}} + \underline{\bm{\eta}})]\\
&=E[\bV\bV^T]\underline{\btheta}_o + E[\bV\underline{\bm{\epsilon}}],
\end{align*}
where for the last relation we have used that $\bm{\eta}$ is a zero mean vector representing noise and that $\bV$ and $\underline{\bm{\eta}}$ are independent. For large enough $D$, the approximation error vector $\underline{\bm{\epsilon}}$ approaches $\bZero_K$, hence the optimal solution becomes:
\begin{align*}
\underline{\btheta}_* &= E[\bV \bV^T]^{-1}\left(E[\bV\bV^T]\underline{\btheta}_o + E[\bV\underline{\bm{\epsilon}}]\right)\\
&= \underline{\btheta}_o + E[\bV \bV^T]^{-1}E[\bV\underline{\bm{\epsilon}}] \approx \underline{\btheta}_o.
\end{align*}
Here we actually imply that \eqref{EQ:DKLMS_model} can be closely approximated by $\by_n \approx \bV_n\underline{\btheta}_o + \underline{\bm{\eta}}_n$; hence, the RFF-DKLMS is actually the standard diffusion LMS applied on the data pairs $\{(\bz_\Omega(\bx_{k,n}), y_{k,n}),\; k=1,\dots,K, \ n=1,2\dots\}$. The difference is that the input vectors $\bz_\Omega(\bx_{k,n})$ may have non zero mean and do not follow, necessarily, the Gaussian distribution. Hence, the available results regarding convergence and stability of diffusion LMS (e.g., \cite{LoSa08, Cattiveli10}) cannot be applied directly (in these works the inputs are assumed to be zero mean Gaussian to simplify the formulas related to stability). To this end, we  will follow a slightly different approach. Regarding the autocorrelation matrix, we have the following result:

\begin{lemma}\label{LEM:PD}
Consider a selection of samples $\bomega_1, \bomega_2, \dots, \bomega_D$, drawn from \eqref{EQ:fourier_of_gaussian} such that $\bomega_i\not=\bomega_j$, for any $i\not=j$. Then, the matrix $\bm{R}=E[\bV \bV^T]$ is strictly positive definite (hence invertible).
\end{lemma}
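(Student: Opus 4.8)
The plan is to collapse the $DK\times DK$ claim to a $D\times D$ one and then to an analytic statement about linear independence of cosines. Since $\bV=\diag(\bz_\Omega(\bx_1),\dots,\bz_\Omega(\bx_K))$ is block diagonal, $\bV\bV^T$ is block diagonal with $k$-th diagonal block $\bz_\Omega(\bx_k)\bz_\Omega(\bx_k)^T$ and vanishing off-diagonal blocks; because the $\bx_k$ are i.i.d.\ with the same law, taking expectations gives $\bR=\bI_K\otimes\bR_0$ with $\bR_0:=E[\bz_\Omega(\bx)\bz_\Omega(\bx)^T]\in\R^{D\times D}$. Hence $\bR$ is strictly positive definite if and only if $\bR_0$ is, and it suffices to treat a single node.

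For the single-node matrix I would use the quadratic-form characterisation: for any $\bc=(c_1,\dots,c_D)^T\in\R^D$,
\[
\bc^T\bR_0\bc = E\big[(\bc^T\bz_\Omega(\bx))^2\big] = \frac{2}{D}\,E\Big[\big(\textstyle\sum_{i=1}^D c_i\cos(\bomega_i^T\bx+b_i)\big)^2\Big]\ge 0,
\]
with equality precisely when $g(\bx):=\sum_{i=1}^D c_i\cos(\bomega_i^T\bx+b_i)$ vanishes $\bx$-almost surely. The crucial structural fact is that the law of $\bx$ is Gaussian with covariance $\sigma_x^2\bI_d$ and therefore has a density strictly positive on all of $\R^d$; since $g$ is continuous (indeed real-analytic), vanishing almost everywhere forces $g\equiv 0$ on $\R^d$. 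Thus strict positive definiteness of $\bR_0$ is equivalent to linear independence of the functions $\{\bx\mapsto\cos(\bomega_i^T\bx+b_i)\}_{i=1}^D$.

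To establish this independence I would pass to complex exponentials, writing $g(\bx)=\tfrac12\sum_{i=1}^D\big(c_i e^{\ii b_i}e^{\ii\bomega_i^T\bx}+c_i e^{-\ii b_i}e^{-\ii\bomega_i^T\bx}\big)$, so that $g$ is a trigonometric polynomial with frequency set $\{\pm\bomega_1,\dots,\pm\bomega_D\}$. Distinct exponentials $\bx\mapsto e^{\ii\bm{\xi}^T\bx}$ are linearly independent, which I would make quantitative through the generalized-mean functional $M[h]:=\lim_{T\to\infty}(2T)^{-d}\int_{[-T,T]^d}h(\bx)\,d\bx$, for which $M[e^{\ii\bm{\xi}^T\bx}]=\mathbbm{1}[\bm{\xi}=\bZero]$. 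Applying $M$ to $g(\bx)e^{-\ii\bomega_k^T\bx}$ then isolates the single coefficient $\tfrac12 c_k e^{\ii b_k}$ (every other exponent is a nonzero frequency and averages to zero), so $g\equiv 0$ yields $c_k=0$ for each $k$, i.e.\ $\bc=\bZero$ and $\bR_0\succ 0$.

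The main obstacle, and a point where I would sharpen the hypothesis, is the frequency-collision issue: the stated assumption $\bomega_i\neq\bomega_j$ does not by itself rule out $\bomega_i=-\bomega_j$ or $\bomega_i=\bZero$, and such a collision can genuinely destroy definiteness (if $\bomega_i=-\bomega_j$ and $b_i+b_j=\pi$, the cosines $\cos(\bomega_i^T\bx+b_i)$ and $\cos(\bomega_j^T\bx+b_j)$ cancel identically, giving a nontrivial $\bc$ in the kernel). The clean remedy is that, since the $\bomega_i$ are drawn from the continuous density \eqref{EQ:fourier_of_gaussian}, the events $\{\bomega_i=-\bomega_j\}$ and $\{\bomega_i=\bZero\}$ have probability zero; hence with probability one all $2D$ frequencies $\pm\bomega_i$ are pairwise distinct, and the mean-value argument above applies verbatim. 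The remaining steps, namely the block reduction, the full-support passage from almost-everywhere vanishing to identical vanishing, and the extraction of the coefficients, are routine.
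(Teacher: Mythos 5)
Your proposal follows the same route as the paper's proof: reduce $\bm{R}$ to its identical diagonal blocks $R_{zz}$, express $\bc^T R_{zz}\bc$ as $E\left[\left(\bz_\Omega(\bx)^T\bc\right)^2\right]$, pass from almost-everywhere vanishing to identical vanishing of $\sum_i c_i\cos(\bomega_i^T\bx+b_i)$, and conclude $\bc=\bZero$ from linear independence of the cosines. The difference is one of completeness rather than strategy. The paper stops at the assertion ``Thus, $\bc=\bZero$,'' leaving the linear-independence step unproved; you actually prove it, by rewriting $g$ as a trigonometric polynomial with frequencies $\pm\bomega_1,\dots,\pm\bomega_D$ and extracting each coefficient with the Bohr mean $M[h]=\lim_{T\to\infty}(2T)^{-d}\int_{[-T,T]^d}h$, which kills every nonzero frequency. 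You also make explicit the intermediate step the paper glosses over, namely that $E\left[\left(\bz_\Omega(\bx)^T\bc\right)^2\right]=0$ only gives vanishing $\bx$-almost surely, and that continuity of $g$ together with the everywhere-positive Gaussian density is what upgrades this to $g\equiv 0$.

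Your most substantive contribution is the observation that the lemma's stated hypothesis is not quite sufficient: $\bomega_i\neq\bomega_j$ does not exclude $\bomega_i=-\bomega_j$ or $\bomega_i=\bZero$, and in the first case a suitable collision of the phases (e.g.\ $b_i+b_j=\pi$) makes the two cosines exact negatives of one another, so that $R_{zz}$ is genuinely singular. The paper's one-line conclusion silently assumes this cannot happen. Your remedy --- that the $\bomega_i$ are drawn from the continuous density \eqref{EQ:fourier_of_gaussian}, so with probability one the $2D$ frequencies $\pm\bomega_i$ are pairwise distinct and nonzero --- is the right fix and turns the lemma into an almost-sure statement, which is how it is used anyway. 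In short, your argument is correct, is the same argument as the paper's in outline, and repairs a real (if measure-zero) gap in the paper's hypothesis.
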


\begin{proof}
Observe that the $DK\times DK$ autocorrelation matrix is given by
$\bm{R} = E[\bV \bV^T] = \diag(R_{zz}, R_{zz} \dots, R_{zz})$,
where $R_{zz} = E[\bz_{\Omega}(\bx_k)\bz_{\Omega}(\bx_k)^T]$, for all $k=1,2,\dots, K$. It suffices to prove that the $D\times D$ matrix $R_{zz}$ is strictly positive definite. Evidently, $\bc^T R_{zz} \bc = \bc^T E\left[\bz_{\Omega}(\bx_k)\bz_{\Omega}(\bx_k)^T\right] \bc  = E\left[\left(\bz_{\Omega}(\bx_k)^T \bc\right)^2\right] \geq 0$, for all $\bc\in\R^D$. Now, assume that there is a $\bc\in\R^D$ such that $E\left[\left(\bz_{\Omega}(\bx_k)^T \bc\right)^2\right] = 0$. Then $\bz_{\Omega}(\bx)^T \bc=0$ for all $\bx\in \R^D$, or equivalently, $\sum_{i=1}^D c_i \cos(\bomega_i^T \bx +b_i) = 0$, for all $\bx\in \R^D$. Thus, $\bc = \bZero$.
\end{proof}

\noindent As expected, the eigenvalues of $R_{zz}$ play a pivotal role in the convergence's study of the algorithm. As $R_{zz}$ is a strictly positive definite matrix, its eigenvalues satisfy $0<\lambda_1\leq\lambda_2\leq\dots\leq\lambda_D$.

\begin{proposition}\label{PRO:DKLMS_cons}
If the the step update $\mu$ satisfies: $0<\mu<\frac{2}{\lambda_D}$, where $\lambda_D$ is the maximum eigenvalue of $R_{zz}$, then the RFF-DKLMS achieves asymptotic consensus in the mean, i.e.,
\begin{align*}
\lim_{n}E[\btheta_{k,n} - \btheta_o] = \bZero_D, \textrm{ for all } k=1,2,\dots, K.
\end{align*}
\end{proposition}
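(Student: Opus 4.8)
The plan is to follow the standard mean-error analysis of diffusion LMS, adapted to the fact that the transformed regressors $\bz_\Omega(\bx_{k,n})$ are in general neither zero-mean nor Gaussian. First I would write the KLMS update \eqref{EQ:DKLMS_update} together with the combination step for the whole network. Stacking the nodes, using $\underline{\bpsi}_{n-1}=\bA\underline{\btheta}_{n-1}$, and substituting the (approximate) data model $\underline{\by}_n\approx\bV_n^T\underline{\btheta}_o+\underline{\bm{\eta}}_n$ from \eqref{EQ:DKLMS_model}, the recursion becomes
\begin{align*}
\underline{\btheta}_n=(\bI-\mu\bV_n\bV_n^T)\bA\underline{\btheta}_{n-1}+\mu\bV_n\bV_n^T\underline{\btheta}_o+\mu\bV_n\underline{\bm{\eta}}_n .
\end{align*}

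Next I would introduce the error vector $\underline{\bm{\delta}}_n:=\underline{\btheta}_n-\underline{\btheta}_o$. Since $\underline{\btheta}_o$ lies in the consensus subspace $\mathcal{O}$ and $A$ is doubly stochastic, we have $\bA\underline{\btheta}_o=\underline{\btheta}_o$ (the invariance property used in the proof of Proposition \ref{PRO:consensus}). Subtracting $\underline{\btheta}_o$ from both sides, writing $\underline{\btheta}_o=(\bI-\mu\bV_n\bV_n^T)\underline{\btheta}_o+\mu\bV_n\bV_n^T\underline{\btheta}_o$, and cancelling the common term yields the homogeneous-plus-noise recursion
\begin{align*}
\underline{\bm{\delta}}_n=(\bI-\mu\bV_n\bV_n^T)\bA\,\underline{\bm{\delta}}_{n-1}+\mu\bV_n\underline{\bm{\eta}}_n .
\end{align*}
I then take expectations and invoke the usual independence assumption, namely that the regressor $\bV_n$ is statistically independent of the past estimate $\underline{\btheta}_{n-1}$ (hence of $\underline{\bm{\delta}}_{n-1}$), and that $\underline{\bm{\eta}}_n$ is zero-mean and independent of $\bV_n$. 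The noise term vanishes, $E[\bV_n\underline{\bm{\eta}}_n]=\bZero$, and with $E[\bV_n\bV_n^T]=\bR$ this gives $E[\underline{\bm{\delta}}_n]=(\bI-\mu\bR)\bA\,E[\underline{\bm{\delta}}_{n-1}]$, so that $E[\underline{\bm{\delta}}_n]=\big[(\bI-\mu\bR)\bA\big]^nE[\underline{\bm{\delta}}_0]$. It remains to show that this matrix power tends to $\bZero$, i.e. that $\rho\big((\bI-\mu\bR)\bA\big)<1$.

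The hard part will be that $(\bI-\mu\bR)\bA$ is a product of two symmetric matrices and is itself \emph{not} symmetric, so its spectral radius cannot be read off from its eigenstructure directly. I would instead bound it submultiplicatively, $\rho\big((\bI-\mu\bR)\bA\big)\le\|(\bI-\mu\bR)\bA\|\le\|\bI-\mu\bR\|\,\|\bA\|$, and recall that $\|\bA\|=1$ for a symmetric doubly-stochastic consensus matrix (as stated in the proof of Proposition \ref{PRO:consensus}). Since $\bR=\diag(R_{zz},\dots,R_{zz})$, the eigenvalues of $\bI-\mu\bR$ are exactly $1-\mu\lambda_i$, $i=1,\dots,D$; and because $R_{zz}$ is strictly positive definite by Lemma \ref{LEM:PD} with $0<\lambda_1\le\dots\le\lambda_D$, the hypothesis $0<\mu<2/\lambda_D$ forces $\max_i|1-\mu\lambda_i|<1$, hence $\|\bI-\mu\bR\|<1$ and therefore $\rho\big((\bI-\mu\bR)\bA\big)<1$. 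Consequently $E[\underline{\bm{\delta}}_n]\to\bZero$, which read block-wise is precisely $\lim_n E[\btheta_{k,n}-\btheta_o]=\bZero_D$ for every $k$. I would close with the remark that convergence is to $\btheta_o$ rather than the exact Wiener solution $\underline{\btheta}_*$ because we dropped the small approximation-error term $\underline{\bm{\epsilon}}_n$, which is justified for large $D$ by the discussion preceding \eqref{EQ:graph_Wiener}.
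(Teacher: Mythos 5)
Your proposal is correct and follows essentially the same route as the paper's own proof: form the network-wide error recursion $\bU_n=\bA\bU_{n-1}-\mu\bV_n\bV_n^T\bA\bU_{n-1}+\dots$, take expectations under the standard independence assumptions, and bound $\rho\bigl((\bI-\mu\bm{R})\bA\bigr)$ via submultiplicativity together with $\|\bA\|\le 1$ and the block-diagonal structure of $\bm{R}$. The only cosmetic difference is that you drop the RFF approximation error $\underline{\bm{\epsilon}}_n$ at the modeling stage, whereas the paper carries it through and discards $E[\bV_n\underline{\bm{\epsilon}}_n]$ at the end; both hinge on the same large-$D$ justification.
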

\begin{proof}
See Appendix \ref{SEC:proof_DKLMS_cons}.
\end{proof}

\begin{remark}If $\bx_{k,n}\sim \cN(\bzero, \sigma_X \bI_d)$, it is possible to evaluate explicitly the entries of $R_{zz}$, i.e.,
\begin{align*}
r_{i,j} =& \frac{1}{2} \exp\left(\frac{-\|\bomega_i - \bomega_j\|^2\sigma_X^2}{2}\right)\cos(b_i - b_j) \\
&+ \frac{1}{2}\exp\left(\frac{-\|\bomega_i + \bomega_j\|^2\sigma_X^2}{2}\right)\cos(b_i + b_j).
\end{align*}
\end{remark}

\begin{proposition}\label{PRO:DKLMS_stab}
For stability in the mean-square sense, we must ensure that both $\mu$ and $A$ satisfy:
\begin{align*}
\left|\rho\left(\bm{I}_{D^2K^2} - \mu \left(\bm{R}\boxtimes\bm{I}_{DK}+\bm{I}_{DK}\boxtimes\bm{R}\right) (\bm{A}\boxtimes\bm{A})\right)\right| < 1,
\end{align*}
where $\boxtimes$ denotes the unbalanced block Kronecker product.
\end{proposition}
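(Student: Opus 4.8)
The plan is to adapt the classical mean-square stability analysis of diffusion LMS (the approach of \cite{LoSa08,Cattiveli10} referenced above) to the present setting, in which the regressors $\bz_\Omega(\bx_{k,n})$ are neither Gaussian nor necessarily zero-mean. First I would collapse the combine step \eqref{EQ:CTA1} and the RFF-DKLMS adapt step \eqref{EQ:DKLMS_update} into a single network-wide recursion. Writing $\underline{\bpsi}_n=\bA\underline{\btheta}_{n-1}$ and expanding $\varepsilon_{k,n}$, the whole-network update reads
\begin{equation*}
\underline{\btheta}_n = (\bI_{DK} - \mu\bV_n\bV_n^T)\bA\,\underline{\btheta}_{n-1} + \mu\bV_n\underline{\by}_n .
\end{equation*}
Inserting the data model \eqref{EQ:DKLMS_model} with the approximation error $\underline{\bm{\epsilon}}_n$ dropped (justified for large $D$ in the derivation preceding Lemma \ref{LEM:PD}), defining the weight error $\underline{\tilde{\btheta}}_n := \underline{\btheta}_o - \underline{\btheta}_n$, and using the consensus-subspace invariance $\bA\underline{\btheta}_o = \underline{\btheta}_o$ (cf.\ Proposition \ref{PRO:consensus}), the terms driven by $\underline{\btheta}_o$ cancel and one is left with the homogeneous-plus-noise recursion
\begin{equation*}
\underline{\tilde{\btheta}}_n = \bB_n\,\underline{\tilde{\btheta}}_{n-1} - \mu\bV_n\underline{\bm{\eta}}_n, \qquad \bB_n := (\bI_{DK} - \mu\bV_n\bV_n^T)\,\bA .
\end{equation*}

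Next I would pass to second moments. Setting $\bm{\Sigma}_n := E[\underline{\tilde{\btheta}}_n\underline{\tilde{\btheta}}_n^T]$, and noting that $\underline{\bm{\eta}}_n$ is zero-mean and independent of both $\bV_n$ and the past (so the noise cross terms vanish), I would invoke the usual independence (separation) assumption---that $\bV_n$ is independent of $\underline{\tilde{\btheta}}_{n-1}$---so that the error outer product averages to the deterministic matrix $\bm{\Sigma}_{n-1}$, giving
\begin{equation*}
\bm{\Sigma}_n = E[\bB_n\,\bm{\Sigma}_{n-1}\,\bB_n^T] + \mu^2\sigma_\eta^2\,\bR,
\end{equation*}
an affine map $\bm{\Sigma}_{n-1}\mapsto\bm{\Sigma}_n$ governed by a single linear operator on the space of $DK\times DK$ matrices.

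To turn this into a spectral condition I would apply block vectorization, respecting the partition of the $DK\times DK$ matrices into $K\times K$ arrays of $D\times D$ blocks induced by $\bA = A\otimes\bI_D$ and by $\bV_n$. With the unbalanced block Kronecker product $\boxtimes$ one has the block-vectorization identity $\mathrm{bvec}(\bB_n\bm{\Sigma}\bB_n^T) = (\bB_n\boxtimes\bB_n)\,\mathrm{bvec}(\bm{\Sigma})$, so the recursion becomes linear with iteration matrix $\bm{F} := E[\bB_n\boxtimes\bB_n]$. Since the factors are symmetric, the block-Kronecker mixed-product identity $(C\bA)\boxtimes(C\bA) = (C\boxtimes C)(\bA\boxtimes\bA)$ lets me split off $(\bA\boxtimes\bA)$ and expand
\begin{align*}
&(\bI_{DK}-\mu\bV_n\bV_n^T)\boxtimes(\bI_{DK}-\mu\bV_n\bV_n^T) = \bI_{D^2K^2} \\
&\quad - \mu(\bV_n\bV_n^T\boxtimes\bI_{DK} + \bI_{DK}\boxtimes\bV_n\bV_n^T) + \mu^2(\bV_n\bV_n^T\boxtimes\bV_n\bV_n^T).
\end{align*}
Taking expectations, neglecting the $O(\mu^2)$ term in the small-step-size regime (as is standard in diffusion-LMS stability analysis) and using $E[\bV_n\bV_n^T]=\bR$, yields
\begin{equation*}
\bm{F} \approx \big[\bI_{D^2K^2} - \mu(\bR\boxtimes\bI_{DK} + \bI_{DK}\boxtimes\bR)\big](\bA\boxtimes\bA).
\end{equation*}
The affine recursion for $\mathrm{bvec}(\bm{\Sigma}_n)$ is bounded---so $\bm{\Sigma}_n$ converges and the algorithm is mean-square stable---if and only if $\rho(\bm{F})<1$, which is precisely the stated condition.

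The main obstacle I expect is the bookkeeping of the block structure: ordinary Kronecker vectorization scrambles the $K\times K$-block-of-$D\times D$ partition, so one must consistently use the unbalanced block Kronecker product together with its matching block-vectorization and mixed-product identities throughout. A second delicate point is that, unlike in the classical analyses, the RFF regressors are non-Gaussian and generally non-zero-mean, so the argument must rest solely on the independence/separation assumption and on the strict positive definiteness of $\bR$ (Lemma \ref{LEM:PD})---which guarantees that $\bI - \mu(\bR\boxtimes\bI_{DK}+\bI_{DK}\boxtimes\bR)$ contracts for small enough $\mu$---rather than on any Gaussian moment simplifications; justifying the neglect of the $O(\mu^2)$ term likewise rests on the small step-size assumption.
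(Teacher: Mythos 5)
Your proposal is correct and follows essentially the same route as the paper's proof in Appendix C: form the network-wide weight-error recursion, drop the RFF approximation error for large $D$, write the second-moment (covariance) recursion under the independence assumption, neglect the $O(\mu^2)$ fourth-moment term for small step sizes, and block-vectorize with the $\vecbr$ operator and the mixed-product identity for $\boxtimes$ to obtain the stated iteration matrix. The only differences are organizational (you expand $\bB_n\boxtimes\bB_n$ before taking expectations, whereas the paper vectorizes the matrix recursion term by term), and your final expression actually matches the proposition statement more cleanly than the paper's last display, which contains a typo ($\bm{I}_{DK}\boxtimes\bm{A}$ in place of $\bm{I}_{DK}\boxtimes\bm{R}$).
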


\begin{proof}
See Appendix \ref{SEC:proof_DKLMS_stab}.
\end{proof}

In the following, we present some experiments to illustrate the performance of the proposed scheme. We demonstrate that the estimation provided by the cooperative strategy is better than having each node working alone (i.e., lower MSE). Similar to section \ref{SEC:DKLMS}, each realization of the experiments uses a different random connected graph with $M=20$ nodes and probability of attachment per node equal to 0.2. The adjacency matrix, $A$, of each graph was generated using the Metropolis rule (resulting to graphs with mean algebraic connectivity around $0.69$), while for the non-cooperative strategies, we used a graph that connects each node to itself, i.e., $A = I_{20}$.  All parameters were optimized (after trials) to give the lowest MSE. The algorithms were implemented in MatLab and the experiments were performed on a i7-3770 machine running at $3.4$GHz with 32 Mb of RAM.

\subsubsection{Example 1. A Linear Expansion in terms of kernels}\label{EXP:Graph_kernel_expansion}
In this set-up, we generate $5000$ data pairs for each node using the following model:
$y_{k,n} = \sum_{m=1}^M a_m\kappa(\bc_{m}, \bx_{k,n}) + \eta_{k,n}$,
where $\bx_{k,n}\in\R^5$ are drawn from $\cN(\bZero, I_5)$ and the noise are i.i.d. Gaussian samples with $\sigma_\eta=0.1$. The parameters of the expansion (i.e., $a_1,\dots, a_M$) are drawn from $\cN(0, 25)$, the kernel parameter $\sigma$ is set to $5$, the step update to $\mu=1$ and the number of random Fourier features to $D=2500$. Figure \ref{FIG:RFF-DKLMS}(a) shows the evolution of the MSE over all network nodes for 100 realizations of the experiment. We note that the selected value of step size satisfies the conditions of proposition \ref{PRO:DKLMS_cons}.

\subsubsection{Example 2}\label{SEC:Graph_square}
Next, we generate the data pairs for each node using the following simple non-linear model:
$y_{k,n} = \bw_0^T\bx_{k,n} + 0.1\cdot(\bw_1^T\bx_{k,n})^2 + \eta_{k,n}$,
where $\eta_{k,n}$ represent zero-mean i.i.d. Gaussian noise with $\sigma_\eta=0.05$ and the coefficients of the vectors $\bw_0, \bw_1\in\R^5$ are i.i.d. samples drawn from $\cN(0,1)$. Similarly to Example 1, the kernel parameter $\sigma$ is set to $5$ and the step update to $\mu=1$. The number of random Fourier coefficients for RFFKLMS was set to $D=300$.
Figure \ref{FIG:RFFKLMS}(b) shows the evolution of the MSE over all network nodes for 1000 realizations of the experiment over $15000$ samples.

\subsubsection{Example 3}\label{SEC:par1}
Here we adopt the following chaotic series model \cite{Stochastic_KLMS}:
$d_{k,n} = \frac{d_{k,n-1}}{1+d^2_{k,n-1}} + u^3_{k,n-1}$,
$y_{k,n} = d_{k,n} + \eta_{k,n}$,
where $\eta_n$ is zero-mean i.i.d. Gaussian noise with $\sigma_\eta=0.01$ and $u_n$ is also zero-mean i.i.d. Gaussian with $\sigma_u = 0.15$. The kernel parameter $\sigma$ is set to $0.05$, the number of Fourier features to $D=100$ and the step update to $\mu=1$. We have also initialized $d_1$ to $1$.
Figure \ref{FIG:RFF-DKLMS}(c) shows the evolution of the MSE over all network nodes for 1000 realizations of the experiment over 500 samples.

\subsubsection{Example 4}\label{SEC:par2}
For the final example, we use another chaotic series model \cite{Stochastic_KLMS}:
$d_{k,n} = u_{k,n} + 0.5v_{k,n} - 0.2 d_{k,n-1} + 0.35 d_{k,n-2}$, $y_{k,n} = \phi(d_{k,n}) + \eta_{k,n}$,
\begin{align*}
\phi(d_{k,n}) =& \left\{\begin{matrix} \frac{d_{k,n}}{3(0.1 + 0.9 d_{k,n}^2)^{1/2}} & d_{k,n}\geq 0 \cr
 \frac{-d_{k,n}^2(1-\exp(0.7d_{k,n}))}{3} & d_{k,n}<0 \end{matrix} \right.,
\end{align*}
where $\eta_{k,n}$, $v_{k,n}$  are zero-mean i.i.d. Gaussian noise with $\sigma_\eta=0.001$ and $\sigma_v^2 = 0.0156$ respectively, and $u_{k,n} = 0.5v_{k,n} + \hat\eta_{k,n}$, where $\hat\eta_n$ is also i.i.d. Gaussian with $\sigma^2 = 0.0156$. The kernel parameter $\sigma$ is set to $0.05$ and the step update to $\mu=1$. We have also initialized $d_1, d_2$ to $1$.
Figure \ref{FIG:RFFKLMS}(d) shows the evolution of the MSE over all network nodes for $1000$ realizations of the experiment over $1000$ samples. The number of random Fourier features was set to $D=200$.

\begin{figure}[t]

\begin{minipage}[b]{.48\linewidth}
  \centering
  \centerline{\epsfig{figure=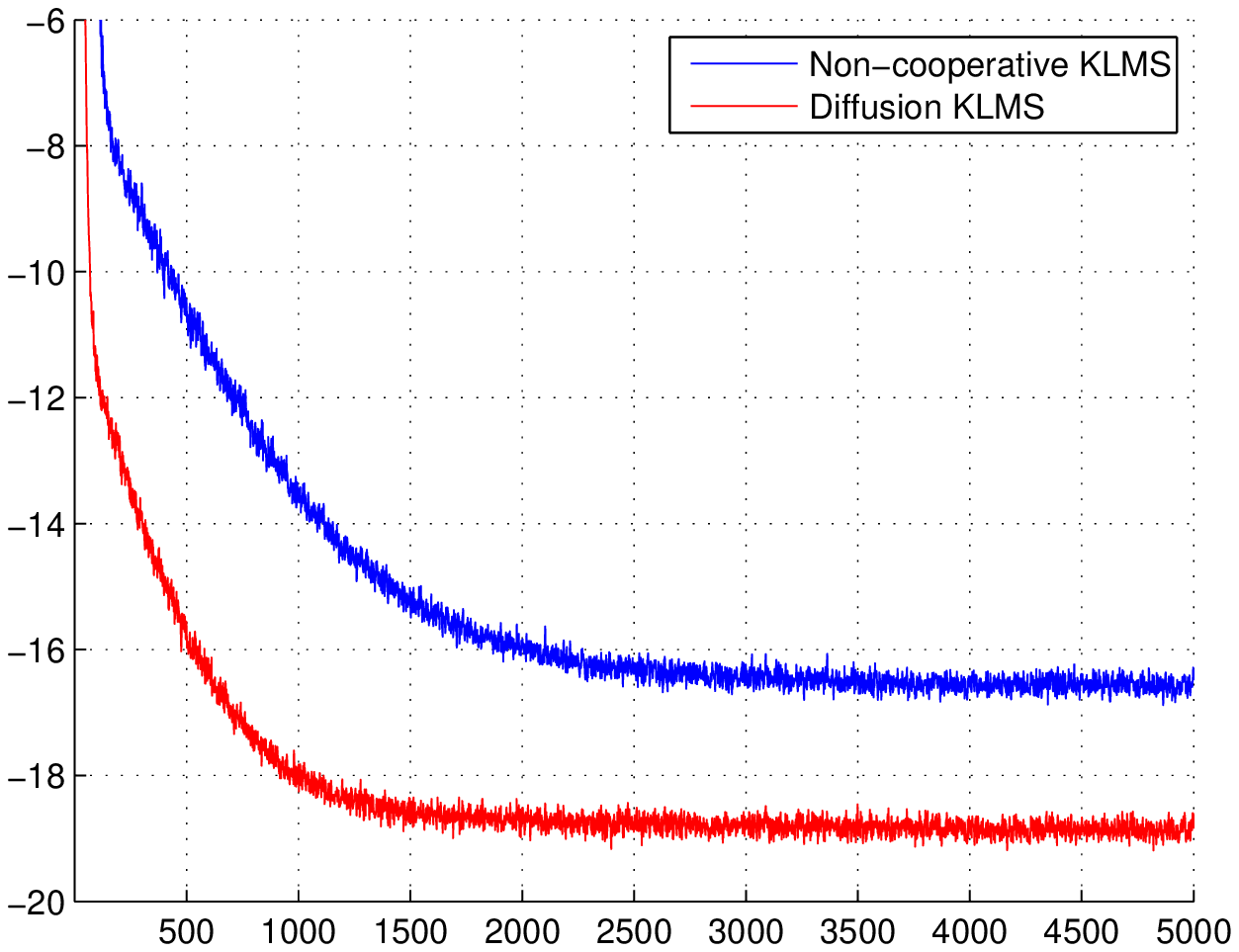,width=3.5cm}}
  \centerline{\scriptsize{(a) Example 1}}\medskip
\end{minipage}
\hfill
\begin{minipage}[b]{0.48\linewidth}
  \centering
  \centerline{\epsfig{figure=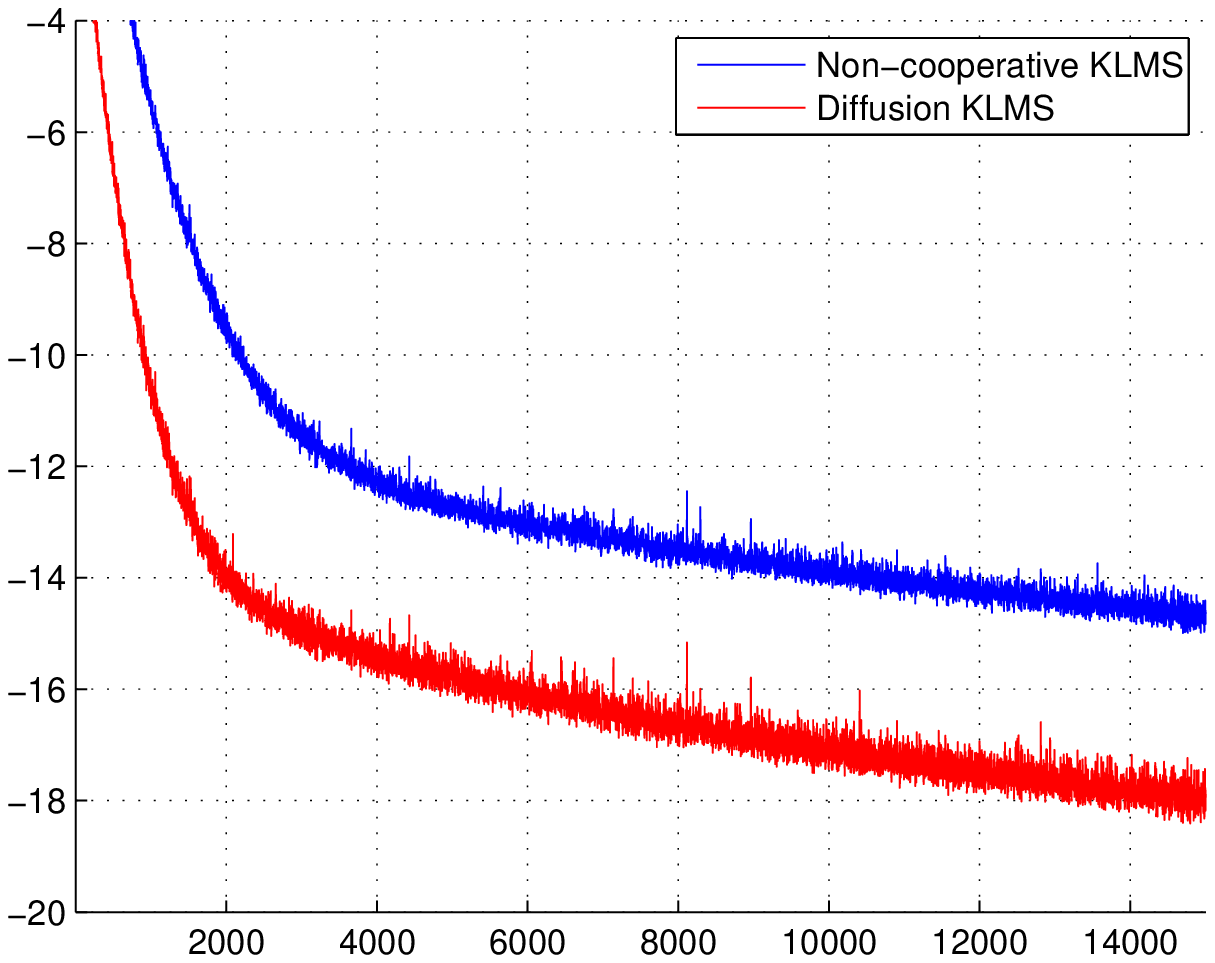,width=3.5cm}}
  \centerline{\scriptsize{(b) Example 2}}\medskip
\end{minipage}

\begin{minipage}[b]{.48\linewidth}
  \centering
  \centerline{\epsfig{figure=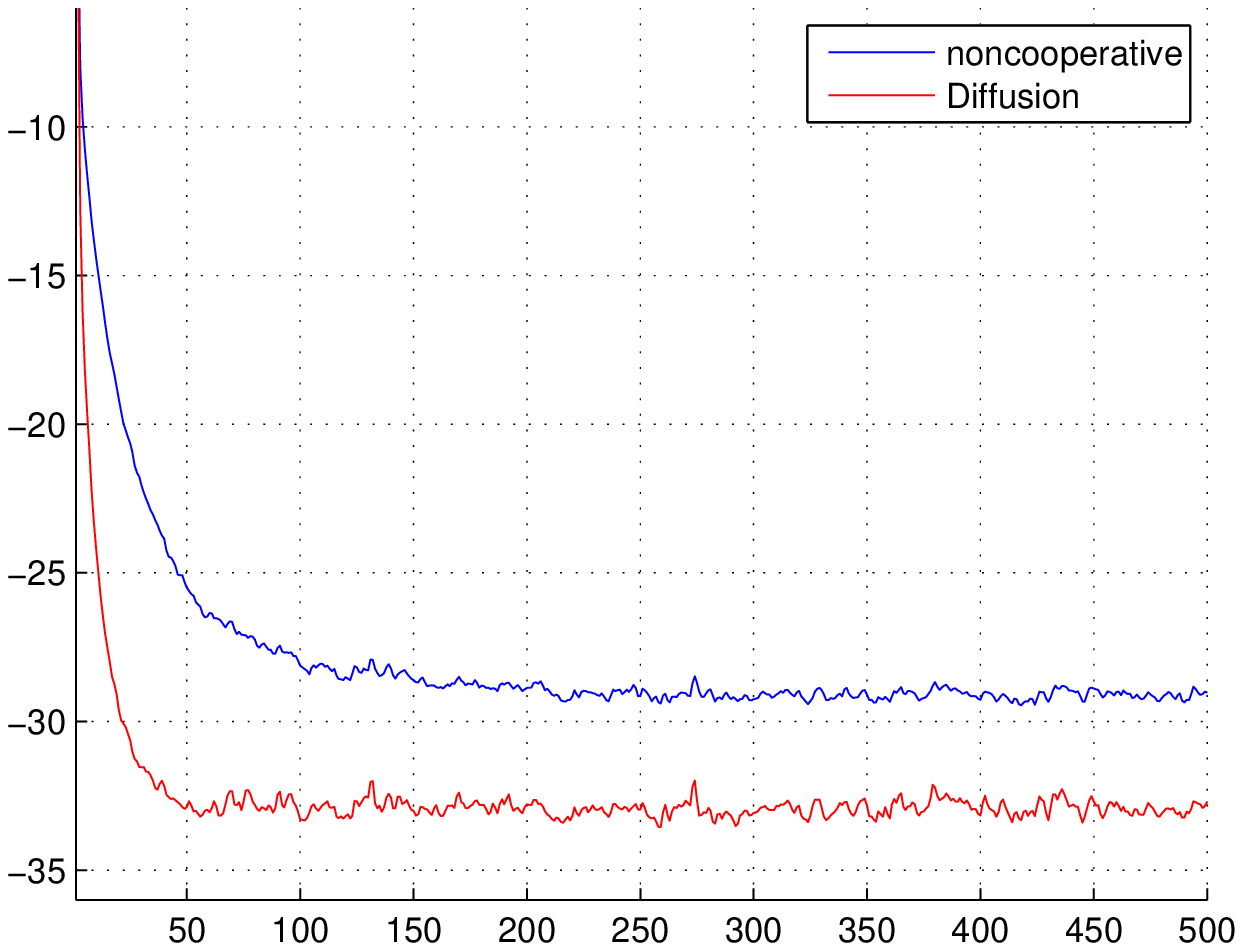,width=3.5cm}}
  \centerline{\scriptsize{(c) Example 3}}\medskip
\end{minipage}
\hfill
\begin{minipage}[b]{0.48\linewidth}
  \centering
  \centerline{\epsfig{figure=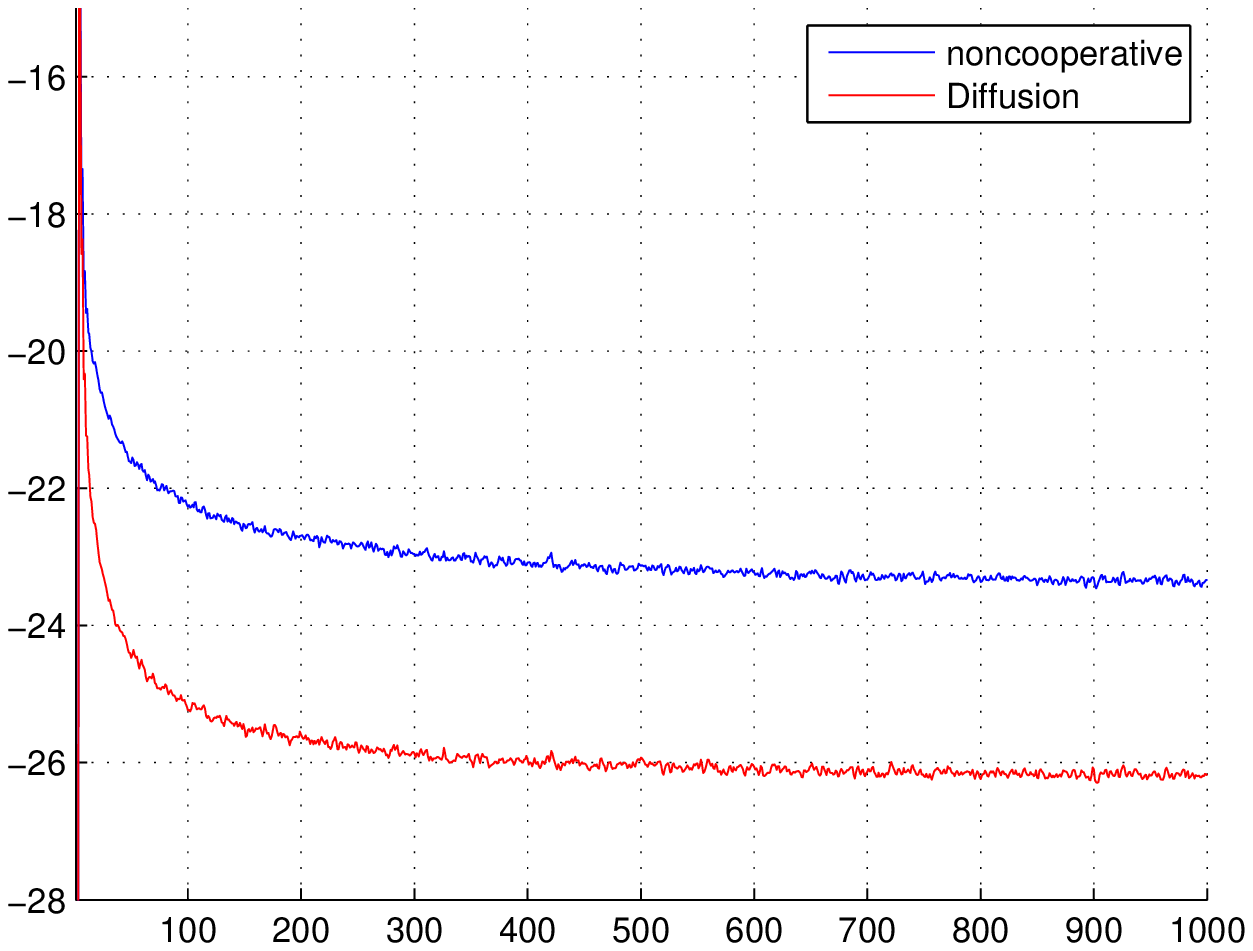,width=3.5cm}}
  \centerline{\scriptsize{(d) Example 4}}\medskip
\end{minipage}
\caption{Comparing the performances of RFF Diffusion KLMS versus the non-cooperative strategy.}
\label{FIG:RFF-DKLMS}
\end{figure}

\section{Revisiting Online Kernel based learning}\label{SEC:OKL}
In this section, we investigate the use of random Fourier features as a general framework for online kernel-based learning. The framework presented here can be seen as a special case of the general distributed method presented in section \ref{SEC:Distributed} for a network with a single node.
Similar to the case of the standard KLMS, the learning algorithms considered here adopt a gradient descent rationale to minimize a specific loss function, $\cL(\bx,y,f)$ for $f\in\cH$, so that $f$ approximates the relationship between $\bx$ and $y$, where $\cH$ is the RKHS induced by a specific choice of a shift invariant (semi)positive definite kernel, $\kappa$. Hence, in general, these algorithms can be summarized by the following step update equation:  $f_n = f_{n-1} + \mu_n \nabla_f\cL(\bx_n,y_n,f_{n-1})$. 
Algorithm \ref{Alg:RFF-OKL} summarizes the proposed procedure for online kernel-based learning. The performance of the algorithm  depends on the quality of the adopted approximation. Hence, a sufficiently large $D$ has to be selected.

\begin{algorithm}[t]
\caption{Random Fourier Features Online Kernel-based Learning (RFF-OKL).}\label{Alg:RFF-OKL}
\begin{algorithmic}
\State $D = \{(\bx_n, y_n), n=1,2,\dots\}$ \Comment{Input}
\State Select a specific (semi)positive definite kernel, a specific loss function $\cL$ and a sequence of possible variable learning rates $\mu_n$. Then generate the matrix $\Omega$ as in \eqref{EQ:omega}.
\State $\btheta_0 \gets \bZero_D$ \Comment{Initialization}
\For{$n=1,2,3, ...$}
\State  $\btheta_n = \btheta_{n-1} + \mu_n \nabla_{\btheta}\cL(\bx_n,y_n,\btheta_{n-1})$. \Comment{Step update}
\EndFor
\end{algorithmic}
\end{algorithm}

Although algorithm \ref{Alg:RFF-OKL} is given in a general setting, in the following we focus on the fixed-budget KLMS.
As it has been discussed in section \ref{SEC:prelim}, KLMS adopts the MSE cost function, which in the proposed framework takes the form: $\cL(\bx,y,\btheta) = E[(y_n - \btheta^T\bz_\Omega(\bx_n))^2]$. Hence, the respective step update equation of algorithm \ref{Alg:RFF-OKL} becomes
\begin{align}\label{EQ:KLMS}
\btheta_n = \btheta_{n-1} + \mu \varepsilon_n\bz_\Omega(\bx_n),
\end{align}
where $\varepsilon_n = y_n - \btheta_{n-1}^T\bz_\Omega(\bx_n)$.
Observe that, contrary to the typical implementations of KLMS, where the system's output is a growing expansion of kernel functions and hence special care has to be carried out to prune the so called dictionary, the proposed approach employs a fixed-budget rationale, which doesn't require any further treatment. We call this scheme the Random Fourier Features KLMS (RFF-KLMS) \cite{RFFKLMS1, RFFKLMS2}.

The study of the convergence properties of RFFKLMS is based on those of the standard LMS. Henceforth, we will assume that the data pairs are generated  by
\begin{align}
y_n = \sum_{m=1}^M a_m\kappa(\bc_m, \bx_n) + \eta_n,\label{EQ:input_model}
\end{align}
where $\bc_1,\dots,\bc_M$ are fixed centers, $\bx_n$ are zero-mean i.i.d, samples drawn from the Gaussian distribution with covariance matrix $\sigma_x^2\bI_d$ and $\eta_n$ are i.i.d. noise samples drawn from $\cN(0, \sigma_\eta^2)$.
Similar to the diffusion case, the eigenvalues of $R_{zz}$, i.e., $0<\lambda_1\leq\lambda_2\leq\dots\leq\lambda_D$, play a pivotal role in the convergence's study of the algorithm. Applying similar assumptions as in the case of the standard LMS (e.g., independence between $\bx_n, \bx_m$, for $n\neq m$ and between $\bx_n, \eta_n$), we can prove the following results.
\begin{proposition}\label{THE:convergence}
For datasets generated by \eqref{EQ:input_model} we have:
\begin{enumerate}
\item If $0<\mu<2/\lambda_D$, then  RFFKLMS converges in the mean, i.e., $E[\btheta_n - \btheta_{\textrm{o}}]\rightarrow \bZero$.
\item The optimal MSE  is given by $$J_n^{\textrm{opt}} = \sigma_\eta^2 + E[\epsilon_n]  - E[\epsilon_n\bz_{\Omega}(\bx_n)]R_{zz}^{-1}E[\epsilon_n\bz_{\Omega}(\bx_n)^T].$$
For large enough $D$, we have $J_n^{\textrm{opt}} \approx \sigma_\eta^2$.
\item The excess MSE is given by $J_n^{\textrm{ex}} = J_n - J_n^{\textrm{opt}}=\trace\left(R_{zz}A_n\right)$, where $A_n = E[(\btheta_n - \btheta_{\textrm{o}})(\btheta_n - \btheta_{\textrm{o}})^T]$.
\item If $0<\mu<1/\lambda_D$, then  $A_n$ converges. For large enough $n$ and $D$ we can approximate $A_n$'s evolution as $A_{n+1} \approx A_n - \mu\left(R_{zz}A_n + A_n R_{zz}\right) + \mu^2\sigma_\eta^2 R_{zz}$. Using this model we can approximate the steady-state MSE ($\approx \trace\left(R_{zz}A_n\right) + \sigma_\eta^2$).
\end{enumerate}
\end{proposition}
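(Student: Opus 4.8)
The plan is to treat RFFKLMS as an ordinary LMS recursion acting on the transformed pairs $(\bz_\Omega(\bx_n), y_n)$ and to run the classical mean / mean-square analysis, while carrying along the extra approximation-error term $\epsilon_n$ and exploiting the strict positive definiteness of $R_{zz}$ from Lemma \ref{LEM:PD}. Writing $\bz_n := \bz_\Omega(\bx_n)$ and the weight-error vector $\tilde{\btheta}_n := \btheta_n - \btheta_o$, I would first substitute the data model $y_n = \btheta_o^T\bz_n + \epsilon_n + \eta_n$ into the update \eqref{EQ:KLMS}, using $y_n - \btheta_{n-1}^T\bz_n = -\tilde{\btheta}_{n-1}^T\bz_n + \epsilon_n + \eta_n$, to obtain the driven recursion
\begin{align*}
\tilde{\btheta}_n = (\bI - \mu\bz_n\bz_n^T)\tilde{\btheta}_{n-1} + \mu(\epsilon_n + \eta_n)\bz_n .
\end{align*}
For part (1), taking expectations and invoking the standard independence assumption (so that $\bz_n$ decouples from $\tilde{\btheta}_{n-1}$) together with $E[\eta_n]=0$ gives $E[\tilde{\btheta}_n] = (\bI - \mu R_{zz})E[\tilde{\btheta}_{n-1}] + \mu E[\epsilon_n\bz_n]$. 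Diagonalizing $R_{zz}=Q\Lambda Q^T$, the homogeneous part contracts iff $|1-\mu\lambda_i|<1$ for every eigenvalue, i.e. iff $0<\mu<2/\lambda_D$; the forcing term $\mu E[\epsilon_n\bz_n]$ pins the limit to $R_{zz}^{-1}E[\epsilon_n\bz_n]$, which vanishes as $D\to\infty$ (cf. the Wiener computation in Section \ref{SEC:DKLMS}), yielding $E[\btheta_n-\btheta_o]\to\bZero_D$.

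For part (2) I would specialize the Wiener minimum $J^{\textrm{opt}} = E[y_n^2] - E[y_n\bz_n]^T R_{zz}^{-1}E[y_n\bz_n]$. Expanding $E[y_n^2]$ and the cross-correlation $E[y_n\bz_n]=R_{zz}\btheta_o+E[\epsilon_n\bz_n]$ with the model and the zero-mean/independence properties of $\eta_n$, the terms quadratic in $\btheta_o$ cancel and one is left with $\sigma_\eta^2 + E[\epsilon_n^2] - E[\epsilon_n\bz_n]^T R_{zz}^{-1}E[\epsilon_n\bz_n]$, which reduces to $\sigma_\eta^2$ in the large-$D$ limit where $\epsilon_n\to 0$. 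Part (3) follows along the same lines: substituting $y_n-\btheta_{n-1}^T\bz_n = -\tilde{\btheta}_{n-1}^T\bz_n+\epsilon_n+\eta_n$ into $J_n = E[(y_n-\btheta_{n-1}^T\bz_n)^2]$, using independence to write $E[(\tilde{\btheta}_{n-1}^T\bz_n)^2]=\trace(R_{zz}A_{n-1})$, and discarding the $\epsilon_n$ cross-terms, I would recover $J_n - J^{\textrm{opt}} = \trace(R_{zz}A_n)$ up to the harmless index shift.

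Part (4) is the crux. Forming the outer product of the driven recursion and taking expectations (the $\eta_n$ cross-terms vanishing by zero mean and independence, the $\epsilon_n$ cross-terms discarded for large $D$) produces
\begin{align*}
A_n ={}& A_{n-1} - \mu(R_{zz}A_{n-1}+A_{n-1}R_{zz}) \\
&+ \mu^2 E[\bz_n\bz_n^T A_{n-1}\bz_n\bz_n^T] + \mu^2 E[(\epsilon_n+\eta_n)^2\bz_n\bz_n^T].
\end{align*}
The last driving term equals $\mu^2\sigma_\eta^2 R_{zz}$ for large $D$, and dropping the fourth-order term for small $\mu$ yields the stated approximate recursion $A_{n+1}\approx A_n - \mu(R_{zz}A_n+A_nR_{zz}) + \mu^2\sigma_\eta^2 R_{zz}$. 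Vectorizing, the homogeneous map has system matrix $\bI - \mu(\bI\otimes R_{zz}+R_{zz}\otimes \bI)$, whose eigenvalues are $1-\mu(\lambda_i+\lambda_j)$; requiring all of them to lie in $(-1,1)$ gives the most restrictive constraint at $\lambda_i=\lambda_j=\lambda_D$, namely $0<\mu<1/\lambda_D$, and hence convergence of $A_n$. Setting $A_{n+1}=A_n=A_\infty$ gives the Lyapunov equation $R_{zz}A_\infty+A_\infty R_{zz}=\mu\sigma_\eta^2 R_{zz}$, solved in the eigenbasis by $A_\infty=(\mu\sigma_\eta^2/2)\bI_D$, so the steady-state MSE is $\trace(R_{zz}A_\infty)+\sigma_\eta^2=(\mu\sigma_\eta^2/2)\trace(R_{zz})+\sigma_\eta^2$, recovering part (3)'s excess-MSE expression as $n\to\infty$.

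The main obstacle is the fourth-order moment $E[\bz_n\bz_n^T A_{n-1}\bz_n\bz_n^T]$ in the covariance recursion. In textbook LMS analysis this is evaluated in closed form by Gaussian moment factoring (Isserlis' theorem), but here $\bz_n=\bz_\Omega(\bx_n)$ is neither Gaussian nor zero-mean, so that route is unavailable --- exactly the difficulty flagged in Section \ref{SEC:DKLMS}. My plan is therefore not to evaluate it exactly but to argue it is $O(\mu^2)$ and negligible against the retained $O(\mu)$ terms for small step size and large $n$, which is what justifies passing from the exact recursion to the approximate one; making this negligibility quantitative, together with the uniform-in-$D$ control of the approximation-error moments $E[\epsilon_n^2]$ and $E[\epsilon_n\bz_n]$, is the part I expect to require the most care.
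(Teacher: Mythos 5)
Your proposal is correct and follows essentially the same route as the paper, whose own proof is just a terse pointer to the standard LMS analysis applied to the transformed pairs $(\bz_\Omega(\bx_n),y_n)$: the weight-error recursion, the independence assumption, the eigenvalue condition on $\bI-\mu R_{zz}$, and the neglect of the fourth-order moment and of the $\epsilon_n$ cross-terms for small $\mu$ and large $D$ are exactly the steps the authors invoke (parts (1) and (4) mirror the arguments in Appendix~\ref{SEC:proof_DKLMS_cons} and~\ref{SEC:proof_DKLMS_stab} specialized to a single node). As a minor point, your $E[\epsilon_n^2]$ in part (2) is the correct term --- the $E[\epsilon_n]$ appearing in the statement is evidently a typo --- and your closed-form steady state $A_\infty=(\mu\sigma_\eta^2/2)\bI_D$ is a small but consistent refinement the paper does not spell out.
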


\begin{proof}
The proofs use standard arguments as in the case of the standard LMS. Hence we do not provide full details. The reader is addressed to any LMS textbook.\\
1) See Proposition \ref{PRO:DKLMS_cons}.\\
2) Replacing $\btheta_n$ with $\btheta_{\textrm{o}}$ in $J_n = E[\varepsilon_n^2]$ gives the result. For large enough $D$, $\epsilon_n$ is almost zero, hence we have $J_n^{\textrm{opt}} \approx \sigma_\eta^2$.\\
3) Here, we use the additional assumptions that $\bv_n$ is independent of $\bx_n$ and that $\epsilon_n$ is independent of $\eta_n$. The result follows after replacing $J_n$ and $J_n^{\textrm{opt}}$ and performing simple algebra calculations.\\
4) Replacing $\btheta_{\textrm{o}}$ and dropping out the terms that contain the term $\epsilon_n$, the result is obtained.
\end{proof}

\begin{remark}
Observe that, while the first two results can be regarded as special cases of the distributed case (see proposition \ref{PRO:DKLMS_cons} and the related discussion in section \ref{SEC:Distributed}), the two last ones describe more accurately the evolution of the solution in terms of mean square stability, than the one given in proposition \ref{PRO:DKLMS_stab}, for the general distributed scheme (where no formula for $\bm{B}_n$ is given). This becomes possible because the related formulas take a much simpler form, if the graph structure is reduced to a single node.
\end{remark}

In order to illustrate the performance of the proposed algorithm and compare its behavior to the other variants of KLMS, we also present some related simulations. We choose the QKLMS \cite{QKLMS} as a reference, since this is one of the most effective and fast KLMS pruning methods. In all experiments, that are presented in this section (described below), we use the same kernel parameter, i.e., $\sigma$, for both RFFKLMS and QKLMS as well as the same step-update parameter $\mu$. The quantization parameter $q$ of the QKLMS controls the size of the dictionary. If this is too large, then the dictionary will be small and the achieved MSE at steady state will be large. Typically, however, there is a value for $q$ for which the best possible MSE (which is very close to the MSE of the unsparsified version) is attained at steady state, while any smaller quantization sizes provide negligible improvements (albeit at significantly increased complexity). In all experimental set-ups, we tuned $q$ (using multiple trials) so that it leads to the best performance. On the other hand, the performance of RFFKLMS depends largely on $D$, which controls the quality of the kernel approximation. Similar to the case of QKLMS, there is a value for $D$ so that RFFKLMS attains its lowest steady-state MSE, while larger values provide negligible improvements. For our experiments, the chosen values for $q$ and $D$ provide results so that to trace out the results provided by the original (unsparsified) KLMS.  Table \ref{TAB:KLMS_times} gives the mean training times for QKLMS and RFFKLMS on the same i7-3770 machine using a MatLab implementation (both algorithms were optimized for speed). We note that the complexity of the RFFKLMS is $\mathcal{O}(Dd)$, while the complexity of QKLMS is $\mathcal{O}(Md)$. Our experiments indicate that in order to obtain similar error floors, the required complexity of RFFKLMS is lower than that of QKLMS.

\subsubsection{Example 5. A Linear expansion in terms of Kernels}
Similar to example 1 in section \ref{SEC:DKLMS}, we generate $5000$ data pairs using \eqref{EQ:input_model} and the same parameters (for only one node). Figure \ref{FIG:kernel_expansion} shows the evolution of the MSE for $500$ realizations of the experiment over different values of $D$. The algorithm reaches steady-state around $n=3000$. The attained MSE is getting closer to the approximation given in proposition \ref{THE:convergence} (dashed line in the figure) as $D$ increases. Figure \ref{FIG:RFFKLMS}(a) compares the performances of RFFKLMS and QKLMS for this particular set-up for $500$ realizations of the experiment using $8000$ data pairs. The quantization size of QKLMS was set to $q=5$ and the number of Fourier features for the RFFKLMS was set to $D=2500$.

\begin{figure}[t]

\begin{minipage}[b]{.48\linewidth}
  \centering
  \centerline{\epsfig{figure=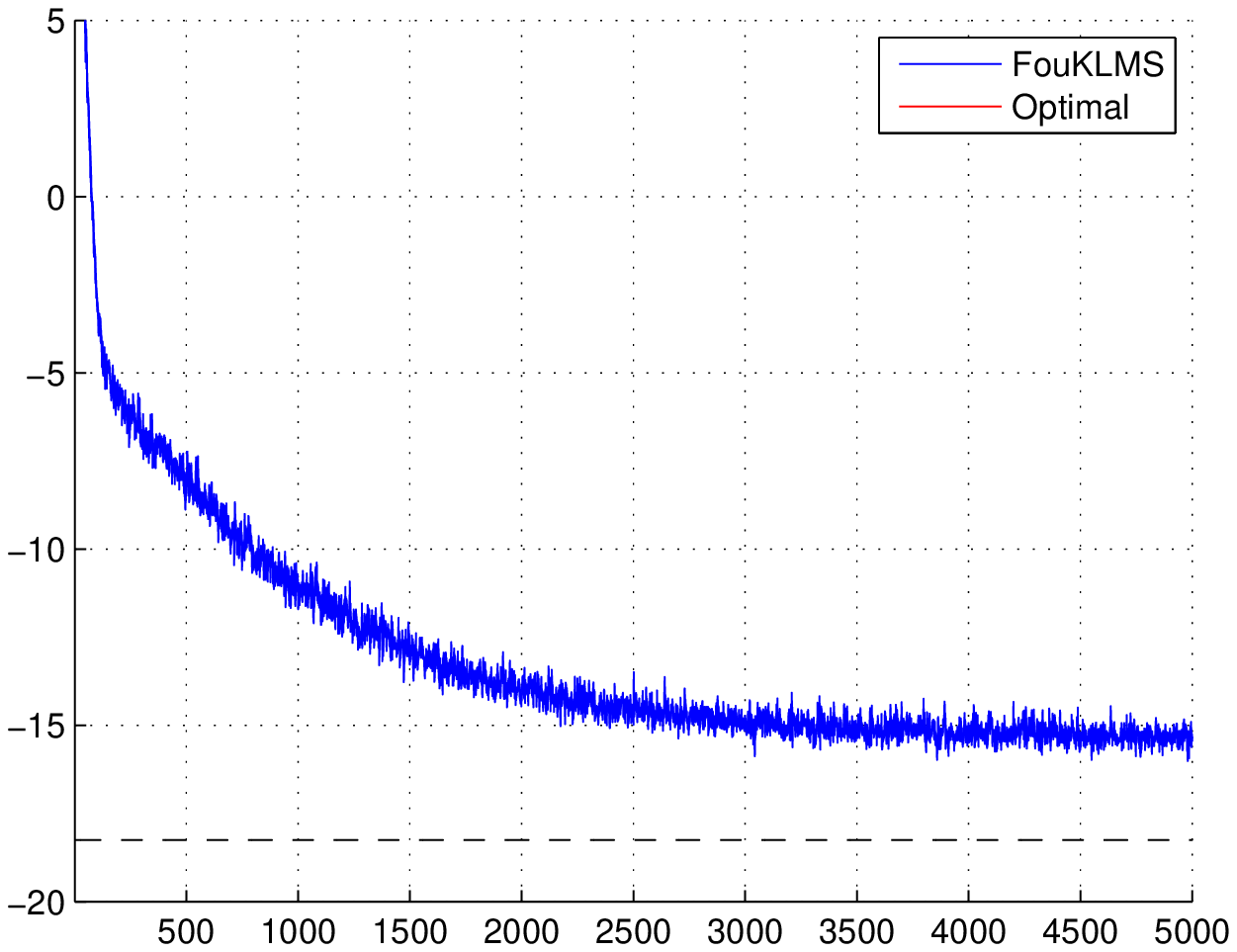,width=3.5cm}}
  \centerline{\scriptsize{(a) $D=500$}}\medskip
\end{minipage}
\hfill
\begin{minipage}[b]{0.48\linewidth}
  \centering
  \centerline{\epsfig{figure=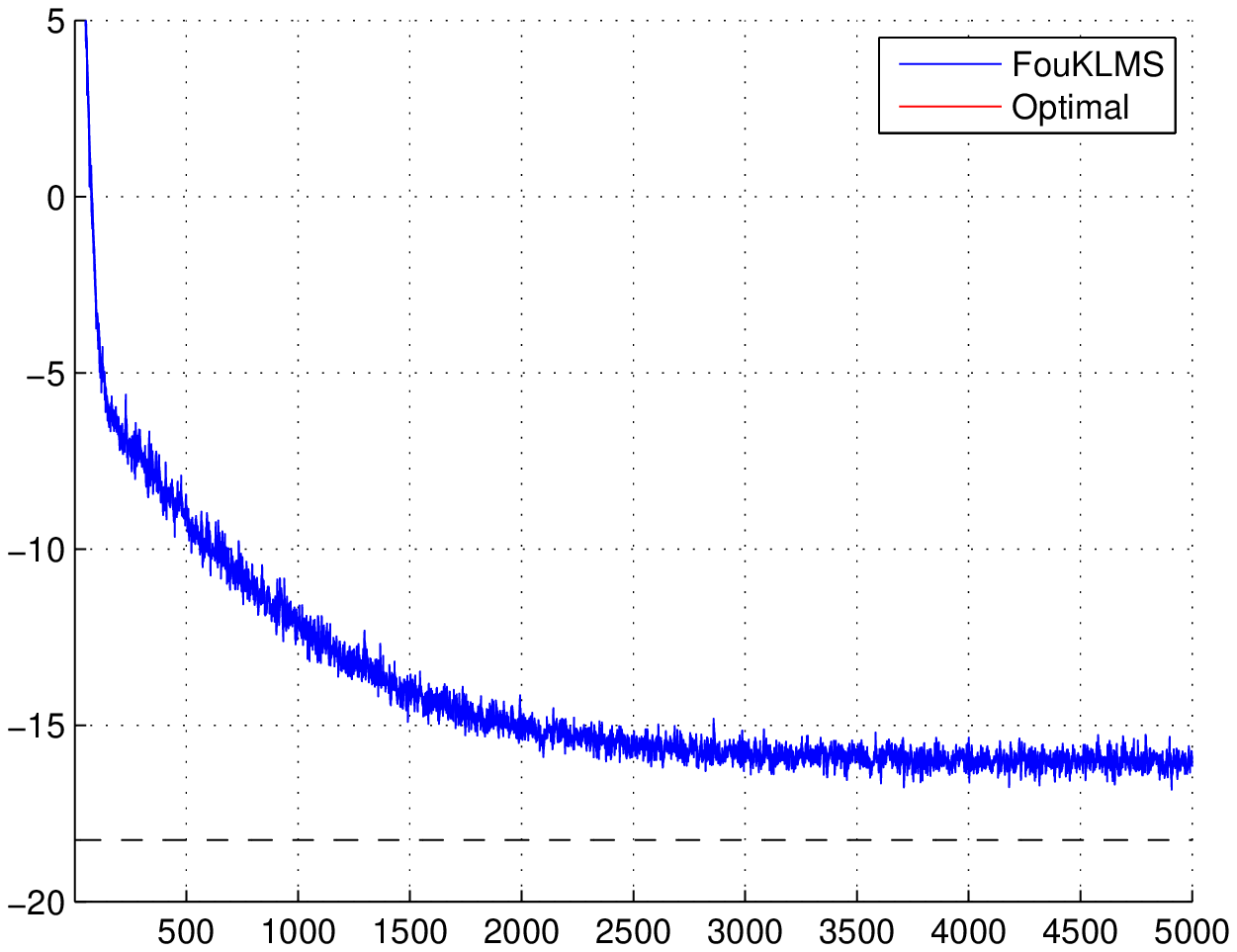,width=3.5cm}}
  \centerline{\scriptsize{(b) $D=1000$}}\medskip
\end{minipage}

\begin{minipage}[b]{.48\linewidth}
  \centering
  \centerline{\epsfig{figure=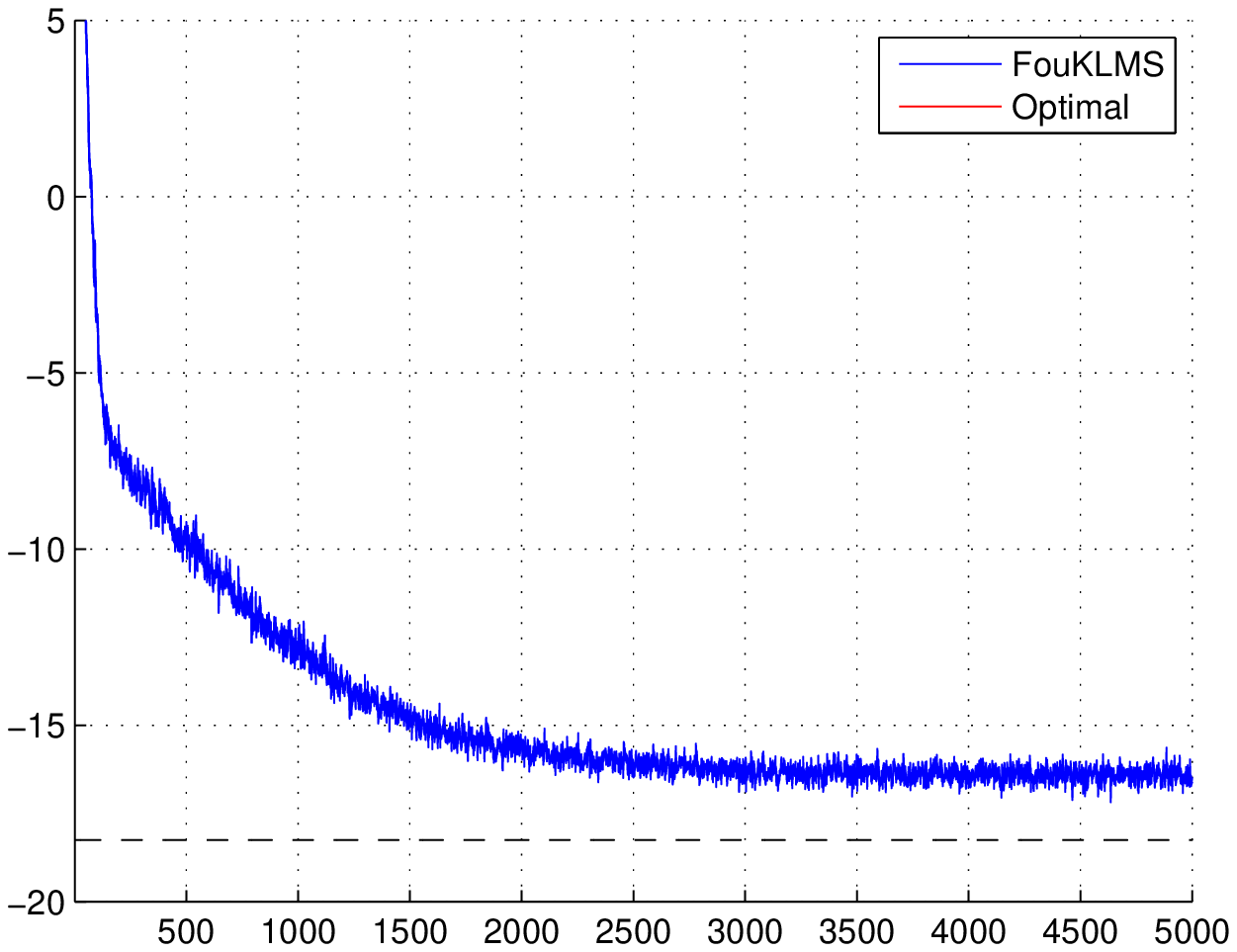,width=3.5cm}}
  \centerline{\scriptsize{(c) $D=2000$}}\medskip
\end{minipage}
\hfill
\begin{minipage}[b]{0.48\linewidth}
  \centering
  \centerline{\epsfig{figure=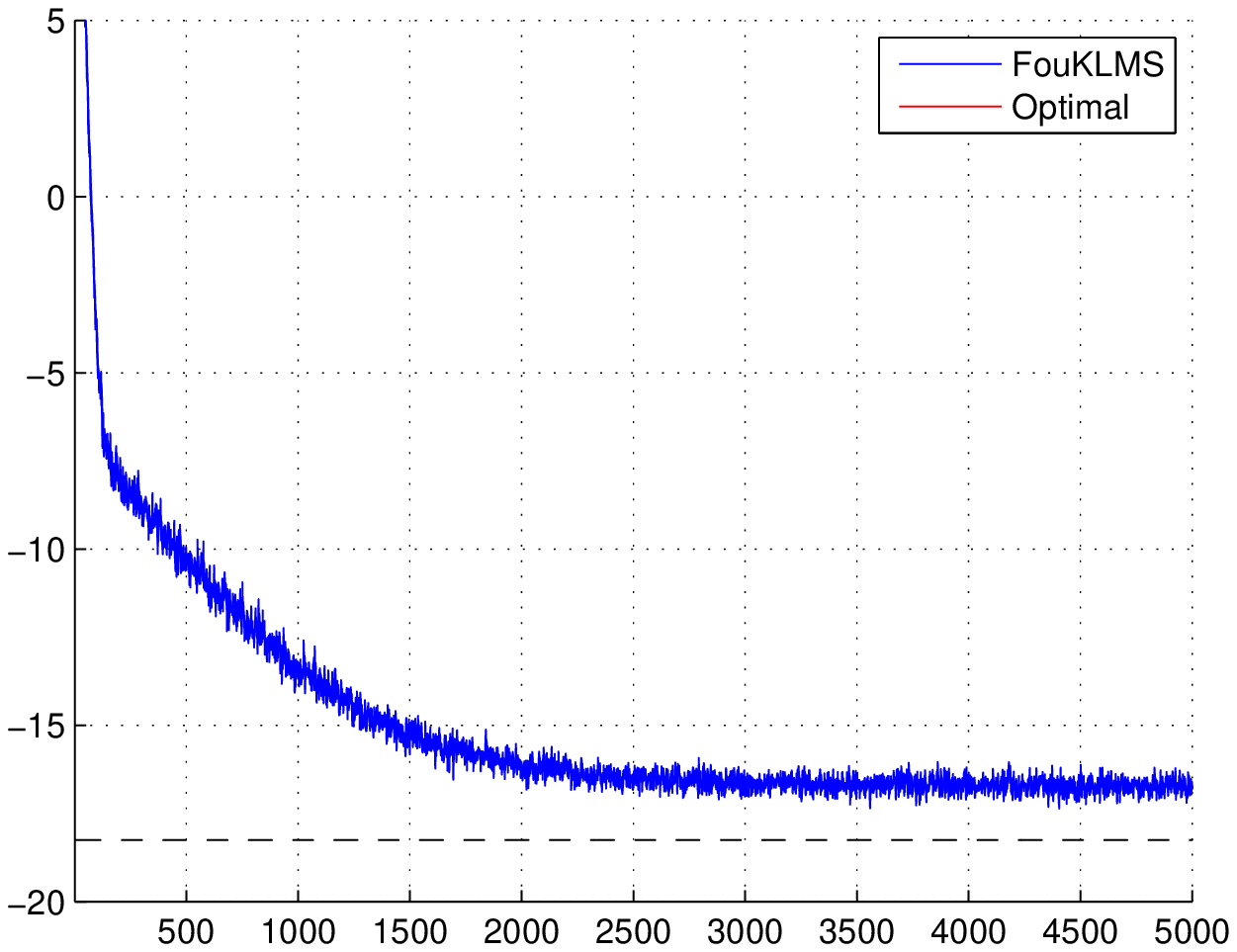,width=3.5cm}}
  \centerline{\scriptsize{(d) $D=5000$}}\medskip
\end{minipage}
\caption{Simulations of RFFKLMS (with various values of $D$) applied on data pairs generated by \eqref{EQ:input_model}. The results are averaged over $500$ runs. The horizontal dashed line in the figure represents the approximation of the steady-state MSE given in theorem \ref{THE:convergence}.}
\label{FIG:kernel_expansion}
\end{figure}

\subsubsection{Example 6}\label{SEC:square}
Next, we use the same non-linear model as in example 2 of section \ref{SEC:Distributed}, i.e.,
$y_n = \bw_0^T\bx_n + 0.1\cdot(\bw_1^T\bx_n)^2 + \eta_n$. The parameters of the model and the RFF-KLMS are the same as in example 1. The quantization size of the QKLMS was set to $q=5$, leading to an average dictionary size $M=100$.
Figure \ref{FIG:RFFKLMS}(b) shows the evolution of the MSE for both QKLMS and RFFKLMS running $1000$ realizations of the experiment over $15000$ samples.

\subsubsection{Example 7}\label{SEC:par1}
Here we adopt the same chaotic series model as in example 3 of section \ref{SEC:DKLMS}, with the same parameters.
Figure \ref{FIG:RFFKLMS}(c) shows the evolution of the MSE for both QKLMS and RFFKLMS running $1000$ realizations of the experiment over $500$ samples. The quantization parameter $q$ for the QKLMS was set to $q=0.01$, leading to an average dictionary size $M=7$.

\subsubsection{Example 8}\label{SEC:par2}
For the final example, we use the chaotic series model of example 4 in section \ref{SEC:DKLMS} with the same parameters.
Figure \ref{FIG:RFFKLMS}(d) shows the evolution of the MSE for both QKLMS and RFFKLMS running $1000$ realizations of the experiment over $1000$ samples. The parameter $q$ was set to $q=0.01$, leading to $M=32$.

\begin{figure}[t]

\begin{minipage}[b]{.48\linewidth}
  \centering
  \centerline{\epsfig{figure=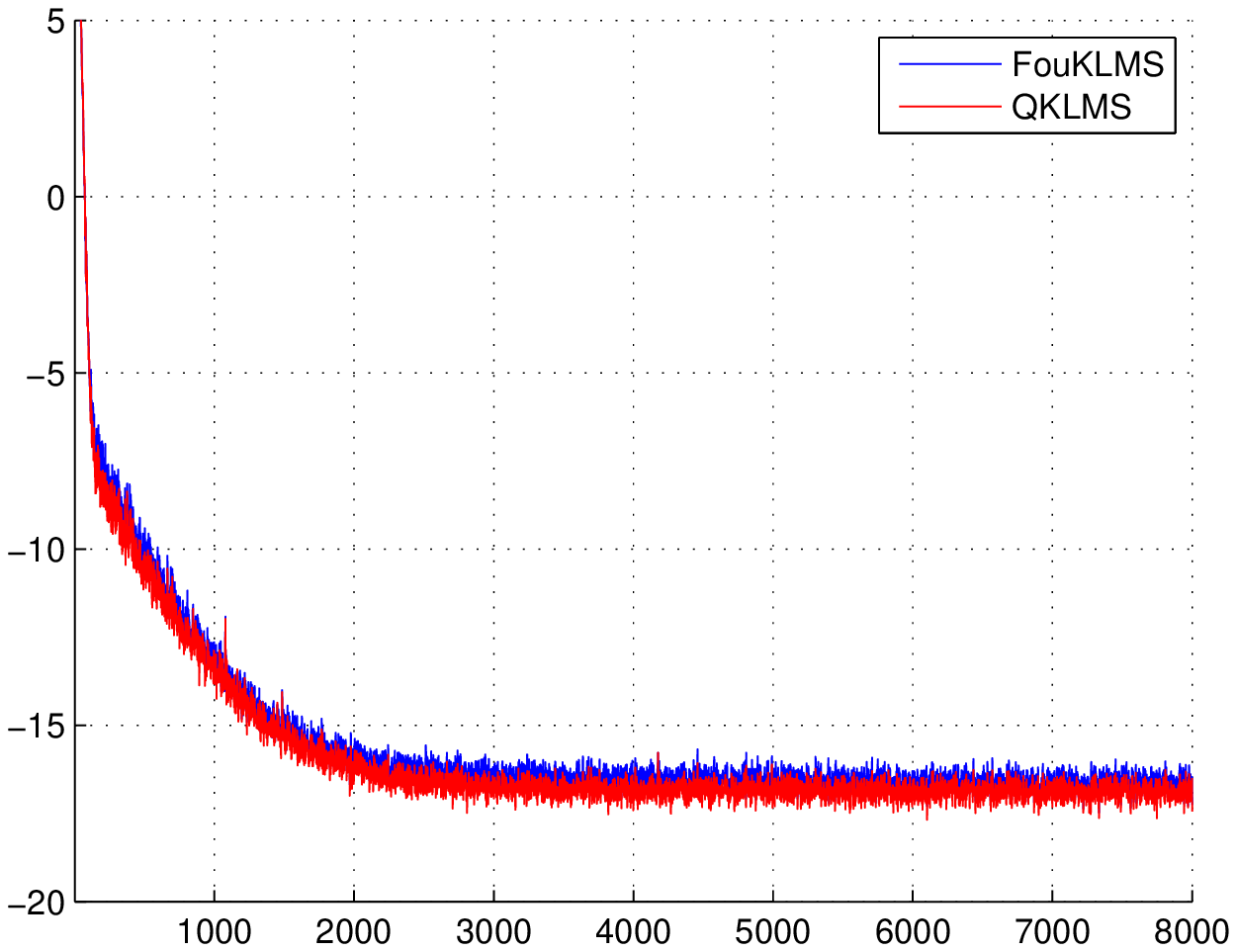,width=3.5cm}}
  \centerline{\scriptsize{(a) Example 5}}\medskip
\end{minipage}
\hfill
\begin{minipage}[b]{0.48\linewidth}
  \centering
  \centerline{\epsfig{figure=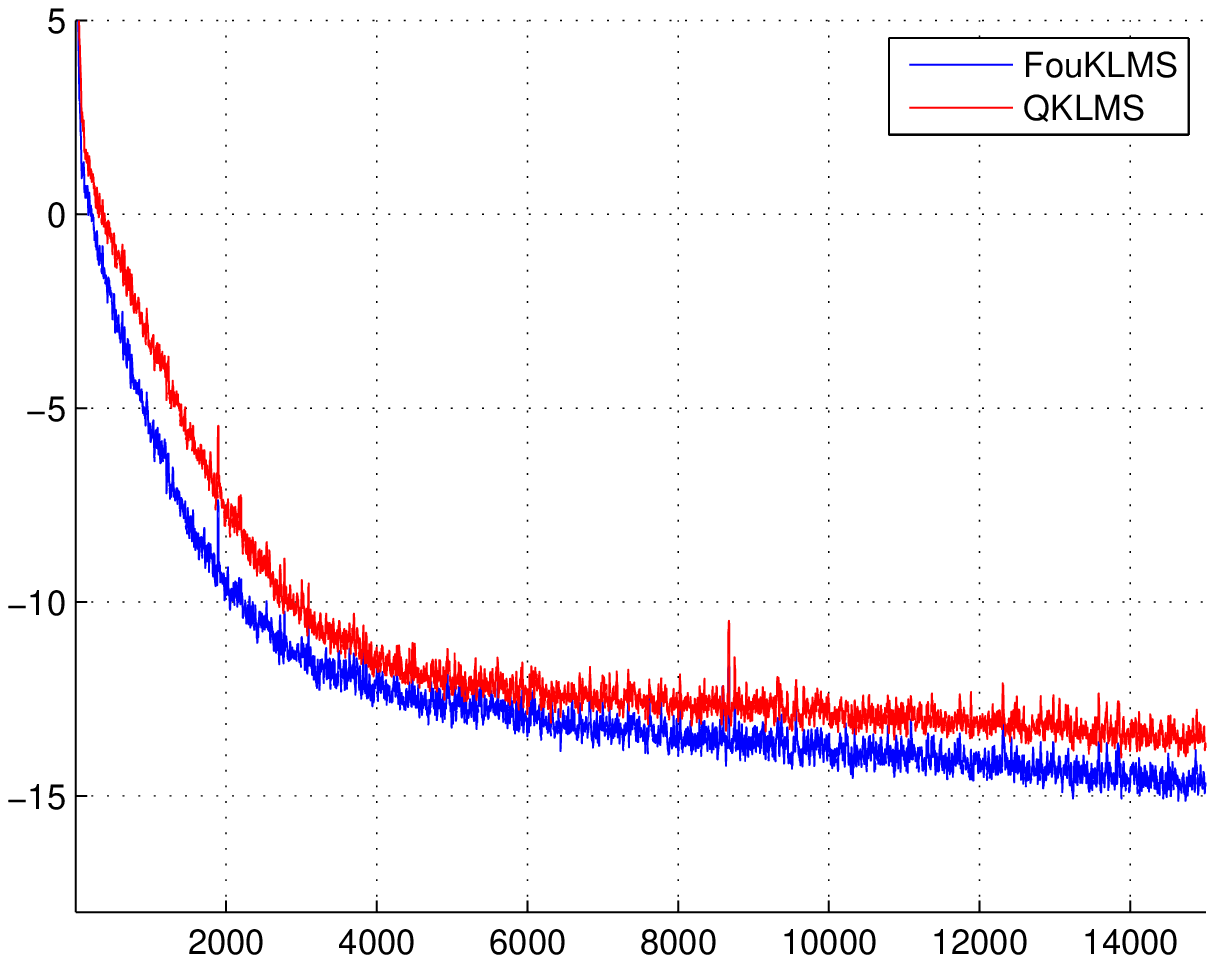,width=3.5cm}}
  \centerline{\scriptsize{(b) Example 6}}\medskip
\end{minipage}

\begin{minipage}[b]{.48\linewidth}
  \centering
  \centerline{\epsfig{figure=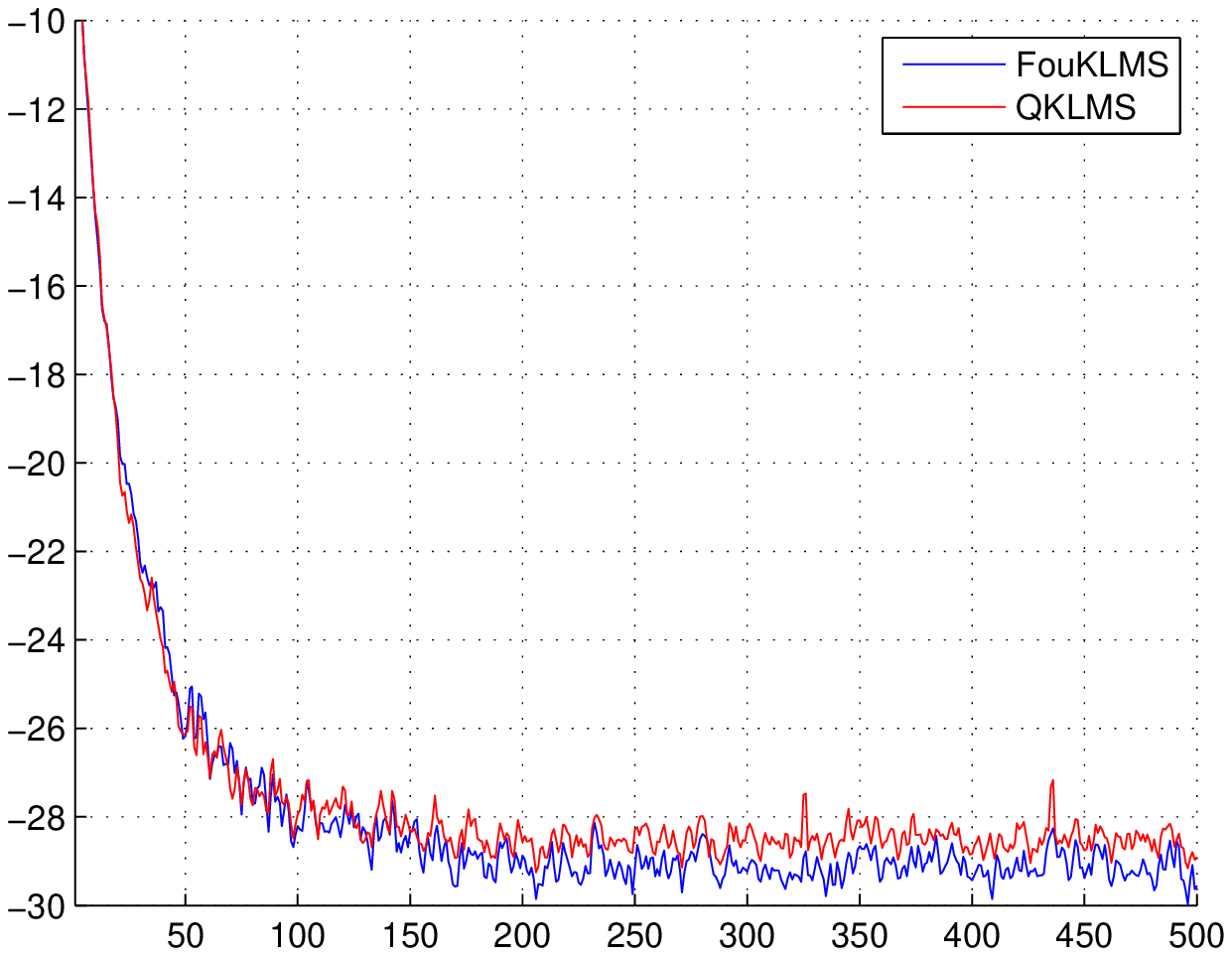,width=3.5cm}}
  \centerline{\scriptsize{(c) Example 7}}\medskip
\end{minipage}
\hfill
\begin{minipage}[b]{0.48\linewidth}
  \centering
  \centerline{\epsfig{figure=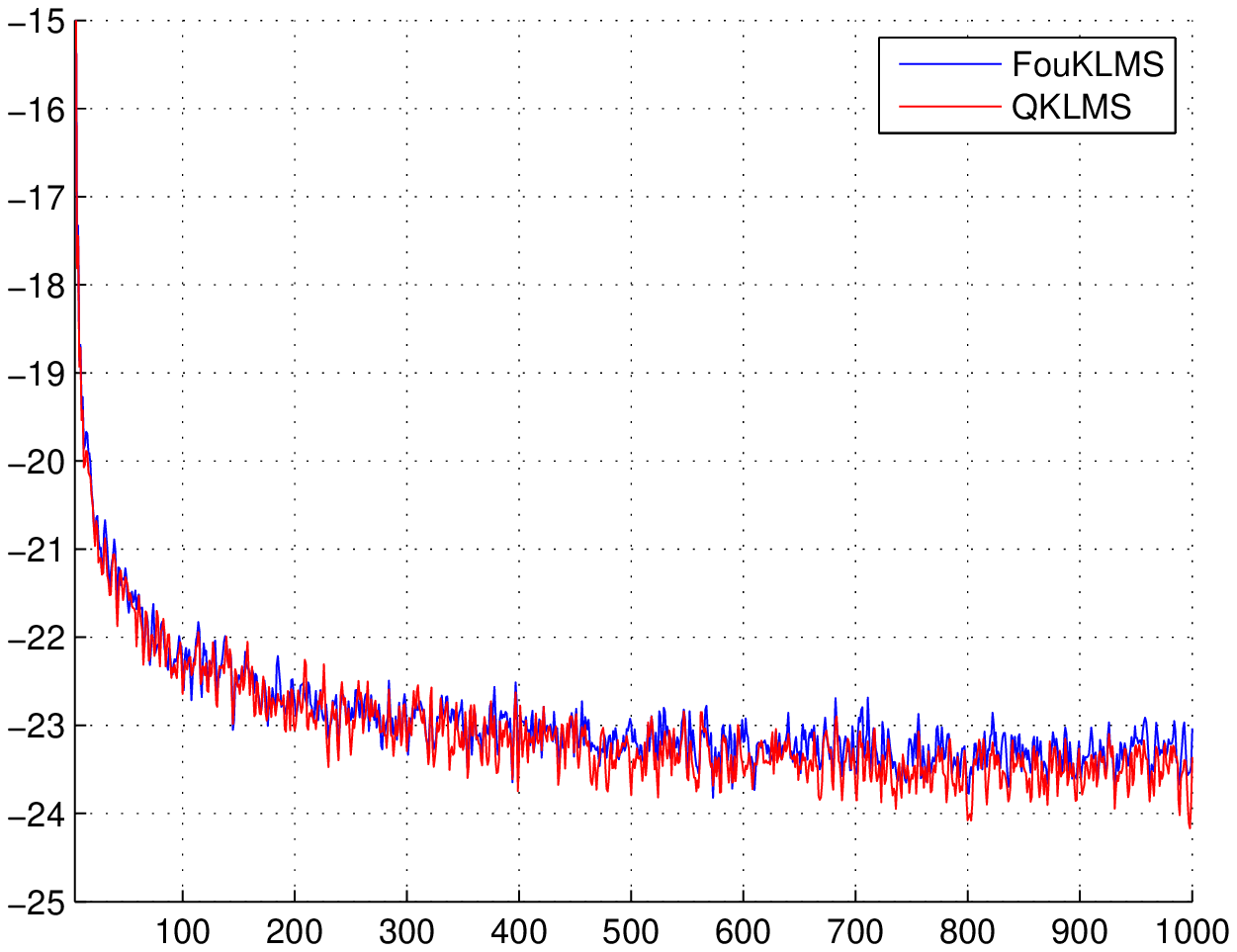,width=3.5cm}}
  \centerline{\scriptsize{(d) Example 8}}\medskip
\end{minipage}
\caption{Comparing the performances of RFFKLMS and the QKLMS.}
\label{FIG:RFFKLMS}
\end{figure}

\begin{table}[t]
\scriptsize
\caption{Mean training times for QKLMS and RFFKLMS.}\label{TAB:KLMS_times}
\centering
\begin{tabular}{|c|c|c|c|}
\hline
Experiment   &   QKLMS time  &  RFFKLMS time  &  QKLMS dictionary size\\\hline
Example 5    &   0.55 sec   &  0.35 sec       &  $M=1088$\\
Example 6    &   0.47 sec   &  0.15 sec       &  $M=104$\\
Example 7    &   0.02 sec   &  0.0057 sec     &  $M=7$\\
Example 8    &   0.03 sec   &  0.008 sec      &  $M=32$\\\hline
\end{tabular}
\vspace{-0.2cm}
\end{table}

\section{Conclusion}\label{SEC:CONCL}
We have presented a complete fixed-budget framework for non-linear online distributed learning in the context of RKHS. The proposed scheme achieves asymptotic consensus under some reasonable assumptions. Furthermore, we showed that the respective regret bound grows sublinearly with time. In the case of a network comprising only one node, the proposed method can be regarded as a fixed budget alternative for online kernel-based learning. The presented simulations validate the theoretical results and demonstrate the effectiveness of the proposed scheme.


%

\appendices
\section{Proof of Proposition \ref{PRO:regret}}\label{SEC:proof_regret}
In the following, we will use the notation $\cL_{k,n}(\btheta):=\cL(\bx_{k,n},y_{k,n},\btheta)$ to shorten the respective equations.
Choose any  $\bm{g}\in\mathcal{B}_{[\bm{0}_D,U_2]}$.
It holds that
\begin{align}
\| \bpsi_{k,n} &- \bm{g}\|^2 - \| {\btheta}_{k,n} - \bm{g}\|^2 =
-\| \bpsi_{k,n} - \btheta_{k,n} \|^2\nonumber\\
            &- 2\langle \btheta_{k,n} -\bpsi_{k,n},\bpsi_{k,n}- \bm{g}  \rangle
            = -\mu_{n}^2\|\nabla \cL_{k,n}(\bpsi_{k,n})\|^2\nonumber\\
            &+2\mu_n\langle \nabla \cL_{k,n}(\bpsi_{k,n}), \bpsi_{k,n} - \bm{g} \rangle.
\label{EQ:proof1}
\end{align}
Moreover, as $\cL_{k,n}$ is convex, we have:
\begin{align}
\cL_{k,n}(\btheta) \geq \cL_{k,n}(\btheta') + \langle \bm{h}, \btheta-\btheta'\rangle,
\label{EQ:proof2}
\end{align}
for all $\btheta,\btheta'\in\mathrm{dom}(\cL_{k,n})$ where $\bm{h}:=\nabla \cL_{k,n}(\btheta)$ is the gradient
(for a differentiable cost function) or a subgradient (for the case of a non--differentiable cost function).
From \eqref{EQ:proof1}, \eqref{EQ:proof2} and the boundness of the (sub)gradient we take
\begin{align}
\| \bpsi_{k,n} - \bm{g} \|^2 &- \|  {\btheta}_{k,n}- \bm{g} \|^2 \geq
-\mu_n^2 U^2 \nonumber\\ &-2\mu_n (\cL_{k,n}(\bm{g})-\cL_{k,n}(\bpsi_{k,n})),
 \label{EQ:proof3}
\end{align}
where $U$ is an upper bound for the (sub)gradient.
Recall that for the whole network we have: $\bm{\underline\psi}_n = \bA \bm{\underline\theta}_{n-1}$ and that for any doubly stochastic matrix, $\bA$, its norm equals to its largest eigenvalue, i.e., $\|\bA\|=\lambda_{\textrm{max}} = 1$. A respective eigenvector is $\bm{\underline{g}}=(\bm{g}^T\dots,\bm{g}^T)^T\in\R^{DK}$, hence it holds that $\bm{\underline{g}} = \bA\bm{\underline{g}}$ and
\begin{align}
\|\underline{\bpsi}_{n}-\underline{\bm{g}}\|
&=\|\bA\underline{\btheta}_{n-1}-\bA\underline{\bm{g}}\|
\leq \|\bA\Vert\Vert \underline{\btheta}_{n-1}-\underline{\bm{g}}\|\nonumber
\\  &=\|\underline{\btheta}_{n-1}-\underline{\bm{g}} \|
\label{EQ:proof4}
\end{align}
where $\underline{\bpsi}_n=(\bpsi_n^T,\dots,\bpsi_n^T)^T\in\R^{DK}$. Going back to \eqref{EQ:proof3} and summing over all $k\in\cN$,
   we have:
\begin{align}
\sum_{k\in\cN} (\| \bpsi_{k,n} &- \bm{g}\|^2 - \| \btheta_{k,n} - \bm{g}\|^2 )\geq \nonumber\\
 -\mu_n^2 K U^2 &-2\mu_n \sum_{k\in\cN}(\cL_{k,n}(\bm{g})-\cL_{k,n}(\bpsi_{k,n})).
 \label{EQ:proof5}
\end{align}
However, for the left hand side of the inequality
we obtain
$\sum_{k\in\cN} (\|  \bpsi_{k,n} - \bm{g} \|^2 - \| \btheta_{k,n} - \bm{g} \|^2 ) = \| \underline {\bpsi}_{n} - \underline{\bm{g}} \Vert^2 - \|\underline{\btheta}_{n} - \underline{\bm{g}}\|^2.$
If we combine the last relation with \eqref{EQ:proof4} and \eqref{EQ:proof5}
we have
\begin{align}
\| \underline{\btheta}_{n-1} - \underline{\bm{g}}\|^2 &- \|\underline{\btheta}_{n} - \underline{\bm{g}}\|^2 \geq \nonumber\\
 -\mu_n^2 K U^2 &-2\mu_n \sum_{k\in\cN}(\cL_{k,n}(\bm{g})-\cL_{k,n}({\bpsi}_{k,n})).
 \label{EQ:proof7}
\end{align}
The last inequality leads to
\begin{align*}
&\frac{1}{\mu_{n}}\| \underline{\btheta}_{n-1} - \underline{\bm{g}}\|^2 -\frac{1}{\mu_{n+1}} \|\underline{\btheta}_{n} - \underline{\bm{g}}\|^2 =\nonumber\\
&+\frac{1}{\mu_{n}}(\| \underline{\btheta}_{n-1} - \underline{\bm{g}}\|^2 - \| \underline{\btheta}_{n} - \underline{\bm{g}}\|^2)\nonumber\\
&+\left(\frac{1}{\mu_{n}}-\frac{1}{\mu_{n+1}}\right)\| \underline{\btheta}_{n} - \underline{\bm{g}}\|^2\geq \nonumber\\
& -\mu_n K U^2 -2 \sum_{k\in\cN}(\cL_{k,n}(\underline{\bm{g}})-\cL_{k,n}(\bpsi_{k,n}))\nonumber\\ &+4KU_2^2\left(\frac{1}{\mu_{n}}-\frac{1}{\mu_{n+1}}\right),
\end{align*}
where we have taken into consideration, Assumption 3 and the boundeness of $g$.
Next, summing over $i=1,\ldots,N+1$,  taking into consideration that  $\sum_{i=1}^{N} \mu_i\leq 2\mu \sqrt{N}$ (Assumption 1) and noticing that some terms telescope, we  have:
\begin{align*}
&\frac{1}{\mu}\| \underline{\btheta}_{0} - \underline{\bm{g}}\|^2 -\frac{1}{\mu_{N+1}} \Vert\underline{\btheta}_{N} - \underline{\bm{g}}\Vert^2
\geq -KU^2 2\mu\sqrt{N} \nonumber \\ &+ 2\sum_{i=1}^N\sum_{k\in\cN}(\cL_{k,i}(\bpsi_{k,i})-\cL_{k,i}(\bm{g}))+4KU_2^2\left(\frac{1}{\mu}-\frac{\sqrt{N+1}}{\mu}\right).
\end{align*}
Rearranging the terms and omitting the negative ones completes the proof:
\begin{align*}
\sum_{i=1}^N\sum_{k\in\cN}&(\cL_{k,i}(\bpsi_{k,i})-\cL_{k,i}(\bm{g}))  \\
&\leq \frac{1}{2\mu}\| \underline{\btheta}_{0} - \underline{\bm{g}}\|^2
 +KU^2 \mu\sqrt{N}  + 2KU_2^2 \frac{\sqrt{N+1}}{\mu}\\
 &\leq \frac{1}{2\mu}\| \underline{\btheta}_{0} - \underline{\bm{g}}\|^2
 +KU^2 \mu\sqrt{N}  + 2KU_2^2 \frac{\sqrt{N}+1}{\mu}.
\end{align*}


\section{Proof of Proposition \ref{PRO:DKLMS_cons}}\label{SEC:proof_DKLMS_cons}
For the whole network, the step update of RFF-DKLMS can be recasted as
\begin{align}
\underline{\btheta}_n &= \bA\underline{\btheta}_{n-1} + \mu \bV_n\underline{\bm{\varepsilon}}_n,\label{EQ:DKLMS_update}
\end{align}
where $\underline{\bm{\varepsilon}}_n = (\varepsilon_{1,n}, \varepsilon_{2,n}, \dots, \varepsilon_{K,n})^T$ and $\varepsilon_{k,n} = y_{k,n} - \bpsi_{k,n}^T\bz_\Omega(\bx_{k,n})$, or equivalently, $\underline{\bm{\varepsilon}}_n=\underline{\by}_n - \bV_n^T\bA\underline{\btheta}_{n-1}$. If we define $\bU_n = \underline{\btheta}_n - \underline{\btheta}_o$ and take into account that $\bA\underline{\bg} = \underline{\bg}$, for all $\underline{\bg}\in \R^{DK}$, such that $\underline{\bg} = (\bg^T, \bg^T, \dots, \bg^T)^T$ for $\bg\in\R^D$,  we obtain:
\begin{align*}
\bU_{n} &=  \bA\underline{\btheta}_{n-1} + \mu\bV_{n}(\underline{\by}_{n} - \bV_{n}^T\bA\underline{\btheta}_{n-1}) - \underline{\btheta}_o\\
&= \bA(\underline{\btheta}_{n-1} - \underline{\btheta}_{o}) + \mu\bV_{n}(\bV_{n}^T\underline{\btheta}_o + \underline{\bm{\epsilon}}_n + \underline{\bm{\eta}}_{n} - \bV_{n}^T\bA\underline{\btheta}_{n-1})\\
&= \bA\bU_{n-1} - \mu\bV_n\bV_n^T\bA\bU_{n-1} + \mu\bV_n\underline{\bm{\epsilon}}_n + \mu\bV_n\underline{\bm{\eta}}_n\\
\end{align*}
If we take the mean values and assume that $\btheta_{k,n}$ and $\bz_{\Omega}(\bx_{k,n})$ are independent for all $k=1,\dots, K$, $n=1,2,\dots,$ we have
\begin{align*}
E[\bU_{n}] = (I_{KD} - \mu\bm{R})\bA E[\bU_{n-1}] + \mu E[\bV_n \underline{\bm{\epsilon}}_n] + \mu E[\bV_n\underline{\bm{\eta}}_n].
\end{align*}
Taking into account that $\bm{\eta}_n$ and $\bV_n$ are independent, that $E[\bm{\eta}_n]=\bZero$ and that for large enough $D$ we have $E[\bV_n \bm{\epsilon}_n]\approx \bZero$, we can take
$E[\bU_n] \approx \left((I_{KD} - \mu\bm{R})\bA\right)^{n-1} E[\bU_1]$.
Hence, if all the eigenvalues of $(I_{KD} - \mu\bm{R})\bA$ have absolute value less than $1$, we have that $E[\bU_n]\rightarrow \bZero$. However, since $\bA$ is a doubly stochastic matrix we have $\|\bA\|\leq 1$ and
\begin{align*}
\|(I_{KD} - \mu\bm{R})\bA\| \leq \|I_{KD} - \mu\bm{R}\|\|\bA\| \leq \|I_{KD} - \mu\bm{R}\|.
\end{align*}
Moreover, as $I_{KD} - \mu\bm{R}$ is a diagonal block matrix, its eigenvalues are identical to the eigenvalues of its blocks, i.e., the eigenvalues of $I_{D} - \mu R_{zz}$. Hence, a sufficient condition for convergence is
$|1 - \mu\lambda_{D}(R_{zz})|<1$,
which gives the result.
\pend
\begin{remark}\label{REM:stab1}
Observe that $|\lambda_{\max}\left((I_{KD} - \mu\bm{R})\bA\right)|\leq |\lambda_{\max}\left((I_{KD} - \mu\bm{R})I_{KD}\right)|$, which means that the spectral radius of $(I_{KD} - \mu\bm{R})\bA$ is generally smaller than that of $(I_{KD} - \mu\bm{R})I_{KD}$ (which corresponds to the non-cooperative protocol). Hence, cooperation under the diffusion rationale has a stabilizing effect on the network \cite{Lopes}.
\end{remark}

\section{Proof of Proposition \ref{PRO:DKLMS_stab}}\label{SEC:proof_DKLMS_stab}
Let $\bm{B}_n = E[\bU_n\bU_n^T]$, where $\bU_n = \bA\bU_{n-1} - \mu\bV_n\bV_n^T\bA\bU_{n-1} + \mu\bV_n\underline{\bm{\epsilon}}_n + \mu\bV_n\underline{\bm{\eta}}_n$. Taking into account that the noise is i.i.d., independent from $\bU_n$ and $\bV_n$ and that $\bm{\epsilon}_n$ is close to zero (if $D$ is sufficiently large), we can take that:
\begin{align*}
\bm{B}_n =& \bA \bm{B}_{n-1}\bA^T - \mu \bA \bm{B}_{n-1}\bA^T \bm{R} - \mu \bm{R}\bA \bm{B}_{n-1}\bA^T\\
 & + \mu^2\sigma_\eta^2\bm{R} + \mu^2 E[\bV_n\bV_n^T\bA\bU_{n-1}\bU_{n-1}^T\bA^T\bV_n\bV_n^T].
\end{align*}
For sufficiently small step-sizes, the rightmost term can be neglected \cite{LMS_converg, Cattiveli10}, hence we can take the simplified form
\begin{align}\label{EQ:var_mat}
\bm{B}_n =& \bA \bm{B}_{n-1}\bA^T - \mu \bA \bm{B}_{n-1}\bA^T \bm{R} - \mu \bm{R}\bA \bm{B}_{n-1}\bA^T \nonumber\\
& + \mu^2\sigma_\eta^2\bm{R}.
\end{align}
Next, we observe that $\bm{B}_n$, $\bm{R}$ and $\bA$ can be regarded as block matrices, that consist of $K\times K$ blocks with size $D\times D$. We will vectorize equation \eqref{EQ:var_mat} using the $\vecbr$ operator, as this has been defined in \cite{Koning91}. Assuming a block-matrix $C$:
\begin{align*}
C = \tiny{\left(\begin{matrix}C_{11} & C_{12} & ... & C_{1K} \cr  C_{21} & C_{22} & ... & C_{2K} \cr \vdots & \vdots & & \vdots \cr C_{K1} & C_{K2} & ... & C_{KK}\end{matrix}\right)},
\end{align*}
the $\vecbr$ operator applies the following vectorization:
\begin{align*}
\vecbr C = ( \veco C_{11}^T, &\veco C_{12}^T, \dots, \veco C_{1K}^T,\dots, \\
& \veco C_{K1}^T \veco C_{K2}^T, \dots, \veco C_{KK}^T)^T.
\end{align*}
Moreover, it is closely related to the following block Kronecker product:
\begin{align*}
D \boxtimes C = {\tiny \left(\begin{matrix}D\otimes C_{11} & D\otimes C_{12} & ... & D\otimes C_{1K} \cr  D\otimes C_{21} & D\otimes C_{22} & ... & D\otimes C_{2K} \cr \vdots & \vdots & & \vdots \cr D\otimes C_{K1} & D\otimes C_{K2} & ... & D\otimes C_{KK}\end{matrix}\right)}.
\end{align*}
The interested reader can delve into the details of the $\vecbr$ operator and the unbalanced block Kronecker product in \cite{Koning91}. Here, we limit our interest to the following properties:
\begin{enumerate}
\item $\vecbr(D C E^T) = (E\boxtimes D) \vecbr C$.
\item $(C\boxtimes D)(E\boxtimes F) = CE \boxtimes DF$.
\end{enumerate}
Thus, applying the $\vecbr$ operator, on both sizes of \eqref{EQ:var_mat} we take
$\bm{b}_n = (\bA\boxtimes\bA)\bm{b}_{n-1} - \mu \left((\bm{RA})\boxtimes\bA\right)\bm{b}_{n-1} - \mu \left((\bm{A})\boxtimes\bm{RA}\right)\bm{b}_{n-1} + \mu^2\sigma_\eta^2\bm{r}$,
where $\bm{b}_n=\vecbr \bm{B}_n$ and $\bm{r} = \vecbr \bm{R}$. Exploiting the second property, we can take:
\begin{align*}
(\bm{RA})\boxtimes\bA = (\bm{RA})\boxtimes(\bm{I}_{DK}\bA) = (\bm{R}\boxtimes\bm{I}_{DK})(\bA\boxtimes\bA),\\
\bm{A}\boxtimes(\bm{RA}) = (\bm{I}_{DK}\bm{A})\boxtimes(\bm{RA}) = (\bm{I}_{DK}\boxtimes\bm{R})(\bA\boxtimes\bA).
\end{align*}
Hence, we finally get:
\begin{align*}
\bm{b}_n =& \left(\bI_{D^2K^2} - \mu\left(\bm{R}\boxtimes\bI_{DK}-\bI_{DK}\boxtimes\bm{A}\right)\right)(\bA\boxtimes\bA)\bm{b}_{n-1}\\
 &+ \mu^2\sigma_\eta^2\bm{r},
\end{align*}
which gives the result.


\ifCLASSOPTIONcaptionsoff
  \newpage
\fi



%
%
%
\bibliographystyle{IEEEtran}
\bibliography{athensbib}

%

\begin{IEEEbiography}[{\includegraphics[width=1in,height=1.25in,clip,keepaspectratio]{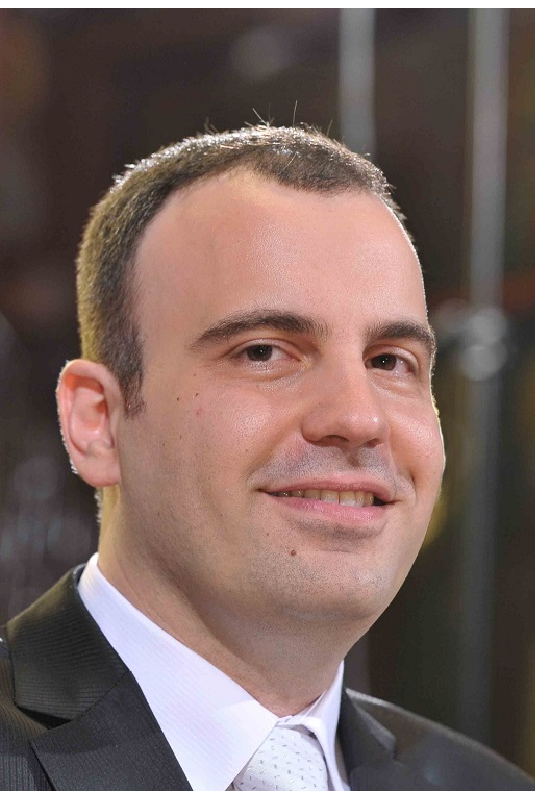}}]{Pantelis Bouboulis}
Pantelis Bouboulis received the B.Sc. degree in Mathematics and the M.Sc.
and Ph.D. degrees in Informatics and Telecommunications
from the National and Kapodistrian
University of Athens, Greece, in 1999, 2002 and 2006,
respectively. From 2007 till 2008, he served as an
Assistant Professor in the Department of Informatics
and Telecommunications, University of Athens. In 2010, he has received the Best scientific paper award for a work presented in the International Conference on Pattern Recognition, Istanbul, Turkey.
Currently, he is a Research Fellow at the Signal and
Image Processing laboratory of the department of Informatics
and Telecommunications of the University
of Athens and he teaches mathematics at the Zanneio
Model Experimental Lyceum of Pireas.  From 2012 since 2014, he served as an
Associate Editor of the IEEE Transactions of Neural Networks and Learning
Systems. His current research interests lie in the areas of machine learning,
fractals, signal and image processing.
\end{IEEEbiography}

\begin{IEEEbiography}[{\includegraphics[width=1in,height=1.25in,clip,keepaspectratio]{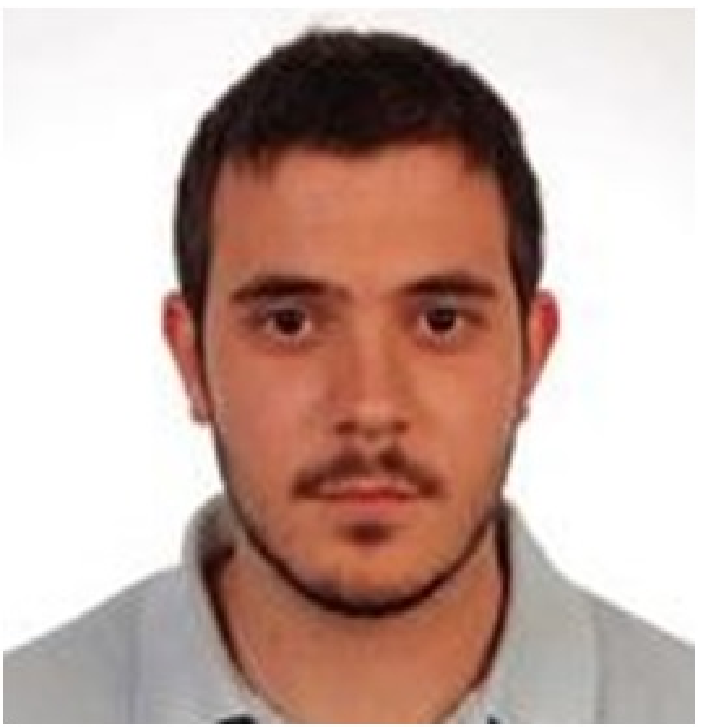}}]{Symeon Chouvardas}
Symeon Chouvardas received the B.Sc., M.Sc. (honors) and Ph.D. degrees from National and Kapodistrian University of  Athens, Greece, in 2008, 2011, and 2013, respectively. He was granted a Heracletus II Scholarship from GSRT (Greek Secretariat for Research and Technology) to pursue his PhD. In 2010 he was awarded with the Best Student Paper Award for the International Workshop on Cognitive Information Processing (CIP), Elba, Italy and in 2016 the Best Paper Award for the International Conference on Communications, ICC, Kuala Lumpur, Malaysia.  His research interests include: machine learning, signal processing, compressed sensing and online learning.
\end{IEEEbiography}


\begin{IEEEbiography}[{\includegraphics[width=1in,height=1.25in,clip,keepaspectratio]{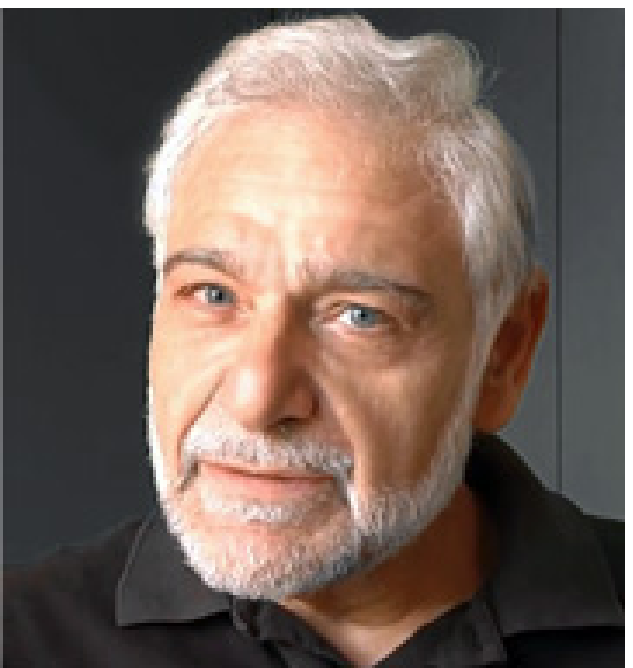}}]{Sergios Theodoridis}
(F' 08) is currently Professor of Signal Processing and Machine Learning in the Department of Informatics and Telecommunications of the University of Athens. His research interests lie in the areas of Adaptive Algorithms, Distributed and Sparsity-Aware Learning, Machine Learning and Pattern Recognition, Signal Processing for Audio Processing and Retrieval. He is the author of the book Machine Learning: A Bayesian and Optimization Perspective, Academic Press, 2015, the co-author of the best-selling book "Pattern Recognition, Academic Press, 4th ed. 2009, the co-author of the book "Introduction to Pattern Recognition: A MATLAB Approach, Academic Press, 2010, the co-editor of the bookœEfficient Algorithms for Signal Processing and System Identification, Prentice Hall 1993, and the co-author of three books in
Greek, two of them for the Greek Open University. He currently serves as Editor-in-Chief for the IEEE Transactions on Signal Processing. He is Editor-in-Chief for the Signal Processing Book Series, Academic Press and co-Editor in Chief
for the E-Reference Signal Processing, Elsevier. He is the co-author of seven papers that have received Best Paper Awards including the 2014 IEEE Signal Processing Magazine best paper award and the 2009 IEEE Computational Intelligence Society Transactions on Neural Networks Outstanding Paper Award. He is the recipient of the 2014 IEEE Signal Processing Society Education Award and the 2014 EURASIP Meritorious Service Award. He has served as a Distinguished Lecturer for the IEEE SP and CAS Societies. He was Otto Monstead Guest Professor, Technical University of Denmark, 2012, and holder of the Excellence Chair, Dept. of Signal Processing and Communications, University Carlos III, Madrid, Spain, 2011. He has served as President of the European Association for Signal Processing (EURASIP), as a member of the Board of Governors for the IEEE CAS Society, as a member of the Board of Governors (Member-at-Large) of the IEEE SP Society and as a Chair of the Signal Processing Theory and Methods (SPTM) technical committee of IEEE SPS. He is Fellow of IET, a Corresponding Fellow of the Royal Society of Edinburgh (RSE), a Fellow of EURASIP and a Fellow of IEEE.
\end{IEEEbiography}




\end{document}